\pgfplotsset{compat=1.9}
\crefname{equation}{Eq.}{Eqs.}
\crefname{pluralequation}{Eqs.}{Eqs.}
\crefname{algorithm}{Algorithm}{Algorithm}
\crefname{figure}{Fig.}{Figs.}
\crefname{pluralfigure}{Figs.}{Figs.}
\crefname{section}{Sect.}{Sects.}
\crefname{pluralsection}{Sects.}{Sects.}
\crefname{table}{Table}{Table}
\crefname{pluraltable}{Tables}{Tables}
\crefname{definition}{Def.}{Def.}
\crefname{pluraldefinition}{Defs.}{Defs.}
\crefname{theorem}{Theorem}{Theorems}
\crefname{pluraltheorem}{Theorems}{Theorems}
\crefname{corollary}{Corollary}{Corollaries}
\crefname{pluralcorollary}{Corollaries}{Corollaries}
\crefname{lemma}{Lemma}{Lemmas}
\crefname{plurallemma}{Lemmas}{Lemmas}
\crefname{example}{Example}{Example}
\crefname{pluralexample}{Examples}{Examples}
\crefname{problem}{Problem}{Problem}
\crefname{pluralproblem}{Problems}{Problems}
\crefname{assumption}{Assumption}{Assumption}
\crefname{pluralassumption}{Assumptions}{Assumptions}
\crefname{remark}{Remark}{Remark}
\crefname{pluralremark}{Remarks}{Remarks}
\crefname{appendix}{Appendix}{Appendices}
\crefname{pluralappendix}{Appendices}{Appendices}
\crefname{line}{Line}{Line}
\crefname{pluralline}{Lines}{Lines}
\newtheorem{example}{Example}
\newtheorem{problem}{Problem}
\newtheorem{theorem}{Theorem}
\newtheorem{definition}{Definition}
\newtheorem{remark}{Remark}
\newtheorem{assumption}{Assumption}
\newcommand*{\States}{\ensuremath{S}}
\newcommand*{\Actions}{\ensuremath{Act}}
\newcommand*{\initState}{\ensuremath{s_I}}
\newcommand*{\MDP}{\ensuremath{(S,\initState,A,P,R,\gamma)}}
\newcommand*{\rmdp}{\ensuremath{\mathcal{M}_R}}
\newcommand*{\RMDP}{\ensuremath{(S,\initState,A,\cP,R,\gamma)}}
\newcommand*{\policy}{\ensuremath{\pi}}
\newcommand*{\policySpace}{\ensuremath{\Pi}}
\newcommand*{\RR}{\mathbb{R}}
\newcommand*{\RRnn}{\RR_{\geq0}}
\newcommand{\bE}{\mathbb{E}}
\newcommand{\cP}{\mathcal{P}}
\newcommand{\cT}{\mathcal{T}}
\newcommand{\ba}{\mathbf{a}}
\newcommand{\bb}{\mathbf{b}}
\newcommand{\bc}{\mathbf{c}}
\newcommand*{\dist}[1]{\Delta_{#1}}
\newcommand{\relinterior}[1]{\ensuremath{\mathrm{relint(#1)}}}
\newcommand{\diag}[1]{\ensuremath{\mathrm{diag(#1)}}}
\newcommand*{\dotp}[2]{\langle #1, #2 \rangle}
\DeclareMathOperator*{\argmax}{argmax}
\DeclareMathOperator*{\argmin}{argmin}
\newcommand{\BE}{\ensuremath{\mathrm{BE}}}
\newcommand{\RBE}{\ensuremath{\mathrm{ORBE}}\xspace}
\newcommand{\besteffort}{best-effort\xspace}
\newcommand{\Besteffort}{Best-effort\xspace}
\newcommand{\BestEffort}{Best-Effort\xspace}  %
\newcommand{\optimalrobust}{optimal robust\xspace}
\newcommand{\OptimalRobust}{Optimal Robust\xspace}  %
\newcommand{\srectangularInner}{$s$-rectangular}
\newcommand{\srectangular}{\srectangularInner\xspace}
\newcommand{\srectangularity}{\srectangularInner ity\xspace}
\definecolor{pastelblue}{rgb}{0.7,0.7,1}
\definecolor{pastelgreen}{rgb}{0.6,1,0.6}
\definecolor{pastelpink}{rgb}{1,0.6,0.6}
\definecolor{red}{rgb}{0.745,0.192,0.102}
\definecolor{darkgreen}{RGB}{34,161,55}
\definecolor{darkblue}{RGB}{0,33,71}
\definecolor{ruhuisstijlrood}{rgb}{0.745,0.192,0.102}
\definecolor{ruhuisstijlzwart}{rgb}{0,0,0}
\definecolor{ruhuisstijlwit}{rgb}{0.98,0.98,0.98}
\definecolor{darkred}{rgb}{0.560784314,0.125490196,0.0666666667}
\definecolor{color1}{RGB}{55,126,184} %
\definecolor{color2}{RGB}{228,26,28} %
\definecolor{color3}{RGB}{77,175,74} %
\definecolor{color4}{RGB}{152,78,163} %
\definecolor{color5}{RGB}{255,127,0} %
\definecolor{color6}{rgb}{0.5, 1.0, 0.83} %
\definecolor{color7}{rgb}{1.0, 0.0, 1.0} %
\definecolor{color8}{rgb}{0.66, 0.66, 0.66} %
\colorlet{plot1}{color1}
\colorlet{plot2}{color2}
\colorlet{plot3}{color3}
\colorlet{plot4}{color4}
\colorlet{plot5}{color5}
\title{Best-Effort Policies for Robust Markov Decision Processes %
}
\author{
    Alessandro Abate%
    ,
    Thom Badings%
    ,
    Giuseppe De Giacomo%
    ,
    Francesco Fabiano%
}
\begin{document}

\maketitle

\begin{abstract}
We study the common generalization of Markov decision processes (MDPs) with sets of transition probabilities, known as robust MDPs (RMDPs). A standard goal in RMDPs is to compute a policy that maximizes the expected return under an adversarial choice of the transition probabilities. If the uncertainty in the probabilities is independent between the states, known as \srectangularity, such optimal robust policies can be computed efficiently using robust value iteration. However, there might still be multiple optimal robust policies, which, while equivalent with respect to the worst-case, reflect different expected returns under non-adversarial choices of the transition probabilities. Hence, we propose a refined policy selection criterion for RMDPs, drawing inspiration from the notions of \emph{dominance} and \emph{best-effort} in game theory. Instead of seeking a policy that only maximizes the worst-case expected return, we additionally require the policy to achieve a \emph{maximal} expected return under different (\ie not fully adversarial) transition probabilities. We call such a policy an \emph{\optimalrobust \besteffort} (\RBE) policy. We prove that \RBE policies always exist, characterize their structure, and present an algorithm to compute them with a manageable overhead compared to standard robust value iteration. \RBE policies offer a principled tie-breaker among optimal robust policies. Numerical experiments show the feasibility of our approach.
\end{abstract}

\section{Introduction}
\label{sec:intro}

\emph{Markov decision processes} (MDPs) are the standard model for sequential decision making in stochastic environments and are ubiquitous in artificial intelligence (AI)~\cite{DBLP:books/daglib/0023820}, operations research~\cite{davis2018markov}, control theory~\cite{aastrom2012introduction}, and robotics~\cite{DBLP:journals/ai/HanheideGHPSAJG17}.
Within AI, MDPs are at the core of many model-based reinforcement learning methods~\cite{DBLP:journals/ftml/MoerlandBPJ23}.
Solving an MDP amounts to computing a \emph{policy} (or \emph{strategy}) for the agent, \ie a mapping from states to actions, that maximizes a particular performance value, such as the expected (discounted) cumulative reward~\cite{DBLP:books/wi/Puterman94}.

\paragraph{Robust MDPs.}
A fundamental limitation of MDPs is the requirement to specify transition probabilities precisely.
In practice, accurately determining these probabilities can be challenging, especially if parameters are uncertain or if the model is learned from data~\cite{DBLP:journals/sttt/BadingsSSJ23}.
Moreover, optimal policies may be sensitive to small changes in the transition probabilities~\cite{DBLP:conf/icml/MannorSST04}.
To address this issue, \emph{robust MDPs} (RMDPs) generalize MDPs by allowing for \emph{sets of transition probabilities}~\cite{DBLP:journals/mor/Iyengar05,DBLP:journals/ior/NilimG05,DBLP:journals/mor/WiesemannKR13}.
That is, instead of assigning precise probabilities between 0 and 1, the transitions in an RMDP are described by a set of feasible probabilities, called the \emph{uncertainty set} of the RMDP.

The standard objective in an RMDP is to compute an \emph{\optimalrobust policy}, defined as a policy that \emph{maximizes} the expected return under the \emph{minimizing} (\ie worst-case) transition probabilities in the uncertainty set. %
Unfortunately, computing \optimalrobust policies under general uncertainty sets is NP-hard~\cite{DBLP:journals/mor/WiesemannKR13}.
To ensure tractability, uncertainty sets are commonly assumed to be convex as well as independent between the states and/or actions of the RMDP, referred to as \emph{rectangularity} of the uncertainty set.
Under these assumptions, \optimalrobust policies can be computed, \eg using robust value iteration.

\paragraph{The adversarial nature of RMDPs.}
When computing an \optimalrobust policy, the choice of transition probabilities is inherently adversarial.
However, in many scenarios, the choice of transition probabilities is \emph{not} actively working against the agent, making this assumption overly conservative.
Take, for example, an autonomous drone flying through uncertain wind conditions.
Clearly, the wind conditions do not depend on the drone's control policy, so reasoning solely about the worst-case conditions might be too conservative.
Moreover, multiple \optimalrobust policies may exist, even though their performance under non-adversarial conditions may differ.
We thus raise the vital question: can we compute a policy that is optimal in the worst case, but also ``is best'' when the environment does not act fully adversarially?

\paragraph{\Besteffort policies.}
To address the limitations of purely adversarial reasoning in RMDPs, we draw inspiration from advances in reactive stochastic games~\citep{10175747,10.1007/978-3-031-56940-1_17}. %
In this framework, a policy is deemed \emph{winning}, \emph{dominant}, or \emph{\besteffort} if it succeeds against \emph{all}, the \emph{maximum} subset, or a \emph{maximal} subset of the environment policies, respectively.
Yet, these papers consider games where only the graph of the model is known and the probabilities are unconstrained, as opposed to RMDPs, where the uncertainty is captured by bounded sets of distributions.

In this paper, we leverage the concepts of dominance and \besteffort to define a refined policy selection criterion for RMDPs, which we term \emph{\optimalrobust \besteffort} (\RBE).
An \RBE policy satisfies two properties:
\begin{enumerate*}[label=(\arabic*)]
\item it achieves an optimal expected return under the \emph{worst-case} transition probabilities; and
\item it is not {dominated} by any other policy, \ie is \besteffort.
\end{enumerate*}
Here, one policy is said to dominate another if it performs at least as well across the entire uncertainty set and strictly better in at least one instance of the transition probabilities from the uncertainty set.
This \besteffort perspective offers a principled tie-breaker among \optimalrobust policies, favoring those achieving a maximal expected return under non-adversarial transition probabilities.
Thus, \RBE policies preserve robust optimality---unlike approaches that update the uncertainty set---while also improving performance in non–fully adversarial environments.

\paragraph{Contributions.}
We introduce the class of {\optimalrobust \besteffort} (\RBE) policies for RMDPs.
These policies combine the worst-case guarantees of standard robust policies with the refinement offered by \besteffort reasoning, ensuring strong performance even when the environment is not fully adversarial.
Specifically, our key contributions are as follows:
\begin{itemize}
    \item We formalize the notions of dominant and \besteffort policies within the context of RMDPs (\cref{sec:definition}).
    \item We present a full characterization of \RBE policies and an efficient algorithm to compute them with small overhead to standard robust value iteration (\cref{sec:value_functions,sec:characterization}).
    \item We empirically demonstrate the feasibility of our techniques as a tie-breaker in robust value iteration (\cref{sec:implementation}).
\end{itemize}
We postpone a detailed discussion of related work to \cref{sec:related}.

\section{Preliminaries}\label{sec:preliminaries}

We write $\dotp{u}{v} \coloneqq \sum_{x \in X} u(x) v(x)$ for the dot product between the functions $u,v \colon X \to \RR$.
The cardinality of a set $X$ is written as $|X|$.
A probability distribution over a set $X$ is a function $\mu \colon X \to [0,1]$ such that $\sum_{x \in X} \mu(x) = 1$.
The set of all probability distributions over $X$ is denoted by $\dist{X}$.

\subsection{Markov Decision Processes}
\label{sec:MDPs}
We consider Markov decision processes (MDPs) with discounted rewards, defined as follows~\citep{DBLP:books/wi/Puterman94}.

\begin{definition}[MDP]\label{def:MDP}
    An MDP is a tuple $\MDP$, where $S$ is a finite set of states, $\initState \in \dist{S}$ is the initial distribution, $A$ is a finite set of actions, $P \colon S \times A \to \dist{S}$ is a transition function, $R \colon S \times A \to \RRnn$ is a state-action reward function, and $\gamma \in (0,1)$ is a discount factor.
\end{definition}

The actions in an MDP are chosen by a (randomized) policy $\policy \colon S \to \dist{A}$. %
We write $\policySpace$ for the set of all policies and simplify $\policy(s)(a)$ as $\policy(s,a)$.
The objective in an MDP is to compute a policy $\policy$ that maximizes the expected return $\rho^\policy_P$: %
\begin{equation}
    \rho^\policy_P 
    \coloneqq \sum\nolimits_{s \in S} \initState(s) V^\policy_P(s)
    = \langle \initState, V^\policy_P \rangle,
\end{equation}
where the value function $V^\policy_P \colon S \to \RR$ is defined as
\begin{equation*}
    V^\policy_P(s) \coloneqq \bE \left[ \sum\nolimits_{t=0}^\infty \gamma^t R^\policy(s_t) \,\Big|\, s_0 = s, s_{t+1} \sim P^\policy(s_t) \right],
\end{equation*}
with the transition and rewards functions for $\policy$ given as
\begin{align}
    \label{eq:P_policy}
    P^\policy(s) &\coloneqq \sum\nolimits_{a \in A} \policy(s,a) P(s,a) \in \dist{S}, 
    \\
    \label{eq:R_policy}
    R^\policy(s) &\coloneqq \sum\nolimits_{a \in A} \policy(s,a) R(s,a) \in \RR_{\geq 0}.
\end{align}
This value function is the fixed point of the Bellman operator $\cT^\policy_P$~\cite{DBLP:books/wi/Puterman94}, which is defined for all states $s \in S$ as
\[
(\cT^\policy_P V)(s) \coloneqq \left[ R^\policy(s) + \dotp{\gamma P^\policy(s)}{V} \right],
\]
whereas the optimal value $V^\star_P \coloneqq \max_{\policy \in \policySpace} V^\policy_P$ is the fixed point of the optimal Bellman operator $\cT^\star_P$ defined as
\[
(\cT^\star_P V)(s) \coloneqq \max_{\policy \in \policySpace} \cT^\policy_P V(s).
\]
Thus, the sequences $V^\policy_{n+1} \coloneqq \cT^\policy_P V^\policy_n$ and $V^\star_{n+1} \coloneqq \cT^\star_P V^\star_n$ converge to their respective fixed points, \ie $\lim_{n \to \infty} V^\policy_n = V^\policy_P$ and $\lim_{n \to \infty} V^\star_n = V^\star_P$.
Subsequently, an optimal policy can be computed as $\policy^\star_P \in \argmax_{\policy \in \policySpace} \cT^\policy_P V^\star_P$.

\subsection{Robust Markov Decision Processes}
\label{sec:RMDPs}
\emph{Robust MDPs} (RMDPs) extend MDPs with \emph{sets of transition probabilities}~\citep{DBLP:journals/mor/Iyengar05,DBLP:journals/ior/NilimG05}.
In an RMDP, the transition function is chosen from a set $\cP \subseteq \{ P \colon S \times A \to \dist{S} \}$ of transition functions, called the \emph{uncertainty set} (also known as the  \emph{ambiguity set}).

\begin{definition}[RMDP]\label{def:RMDP}
    A robust MDP (RMDP) is a tuple $\RMDP$, where $S$, $\initState$, $A$, $R$, and $\gamma$ are defined as in an MDP, and where $\cP \subseteq \{ P \colon S \times A \to \dist{S} \}$ is a set of transition functions, called the \emph{uncertainty set}.
\end{definition}

The robust expected return $\rho^\policy_\cP$ for the policy $\policy$ is defined as the worst-case expected return over the uncertainty set $\cP$:
\begin{equation}
    \label{eq:RMDP:robust_return}
    \rho^\policy_\cP \coloneqq \min_{P \in \cP} \rho^\policy_P.
\end{equation}
The standard objective in an RMDP is to find an \emph{\optimalrobust policy} $\policy^\star_\cP$ maximizing the robust expected return~$\rho^\star_\cP$: 
\begin{equation}
    \label{eq:RMDP:optimal_policy}
    \policy^\star_\cP \in \argmax_{\policy \in \policySpace} \rho^\policy_\cP, \quad
    \rho^\star_\cP \coloneqq \max_{\policy \in \policySpace} \rho^\policy_\cP.
\end{equation}
Unfortunately, solving \cref{eq:RMDP:robust_return,eq:RMDP:optimal_policy} is NP-hard for general uncertainty sets $\cP$, even if they are convex~\citep{DBLP:journals/mor/WiesemannKR13}.
Thus, $\cP$ is commonly assumed to be decomposable over states and/or state-action pairs, which is also known as \emph{rectangularity} of the uncertainty set.

\begin{definition}[Rectangularity]
    \label{def:rectangularity}
    The uncertainty set $\cP$ is \emph{\srectangular} if it can be decomposed state-wise as $\cP = \bigtimes_{s \in S} \cP_s$, where $\cP_s \subseteq \{ P \colon \Actions \to \dist{S} \}$.
    Moreover, $\cP$ is \emph{$(s,a)$-rectangular} if it can be decomposed state-action-wise as $\cP = \bigtimes_{s \in S, a \in A} \cP_{s,a}$, where $\cP_{s,a} \subseteq \dist{S}$.
\end{definition}

$(s,a)$-rectangularity is a special case of \srectangularity.
\begin{assumption}
Throughout the paper, the uncertainty set $\cP$ of an RMDP is assumed to be \srectangular.
\end{assumption}

Under \srectangularity, optimal policies may need to be randomized~\citep[Prop.~1]{DBLP:journals/mor/WiesemannKR13}.
Our definitions follow the usual semantics that the environment knows the stochastic policy of the agent but not the actual actions sampled from this policy, known as the \emph{environment first} (or \emph{nature first}) semantics~\citep{Suilen2024RMDPs}.

\paragraph{Robust value iteration.}
Under \srectangularity, for every policy $\policy$, there is a robust value function $V^\policy_\cP \colon S \to \RR$ that satisfies $V^\policy_\cP(s) \coloneqq \min_{P \in \cP} V^\policy_P(s)$ for all $s \in S$~\citep{DBLP:journals/mor/WiesemannKR13}. 
This value function $V^\policy_\cP$ is the fixed point of the robust Bellman operator $\cT^\policy_\cP$ for every $s \in S$:
\[
(\cT^\policy_\cP V)(s) \coloneqq \min_{P \in \cP_S} \left[ R^\policy(s) + \dotp{\gamma P^\policy(s)}{V} \right].
\]
Similarly, there exists an \optimalrobust value function $V^\star_\cP \coloneqq \max_{\policy \in \policySpace} V^\policy_\cP$, which is the fixed point of the \optimalrobust Bellman operator $\cT^\star_\cP$, defined for all $s \in S$ as
\[
(\cT^\star_\cP V)(s) \coloneqq \max_{\policy \in \policySpace} \cT^\policy_\cP V(s).
\]
\emph{Robust value iteration} leverages these fixed points so that the sequences $V^\policy_{n+1} \coloneqq \cT^\policy_\cP V^\policy_n$ and $V^\star_{n+1} \coloneqq \cT^\star_\cP V^\star_n$ converge to their respective fixed points, \ie $\lim_{n \to \infty} V^\policy_n = V^\policy_\cP$ and $\lim_{n \to \infty} V^\star_n = V^\star_\cP$.
Subsequently, an \optimalrobust policy can be computed as $\policy^\star_\cP \in \argmax_\policy \cT^\policy_\cP V^\star_\cP$.

\section{\BestEffort Policies in RMDPs}\label{sec:definition}

The \optimalrobust policy in \cref{eq:RMDP:optimal_policy} assumes the choice of transition function from the uncertainty set to be fully adversarial.
Here, we introduce \emph{dominance} and \emph{\besteffort} as the basis for a policy selection criterion that also considers non-adversarial scenarios.
These notions have been used in uncertain stochastic games~\citep{10175747}, but, as we discuss in~\cref{sec:related}, these results do not carry over to RMDPs.

\subsection{Dominant and \BestEffort Policies}
In this section, we tailor the definitions of \emph{dominant} and \emph{\besteffort} policies from~\citet{10175747} to RMDPs.
The first concept is that of \emph{dominance} between policies.
\begin{definition}[Dominance]
    \label{def:rmdp:dominance}
    Let $\policy, \policy' \in \policySpace$ be policies for the RMDP $\rmdp$.
    The policy $\policy$ \emph{dominates} $\policy'$, written $\policy \geq_{\cP} \policy'$, if and only if $\rho^\policy_P \geq \rho^{\policy'}_P$ for all $P \in \cP$.
\end{definition}
Intuitively, $\policy$ dominates $\policy'$ if $\policy$ does not perform worse than $\policy'$ under any transition function $P \in \cP$.
If, in addition, the policy $\policy$ also attains a \emph{strictly higher} expected return in some $P \in \cP$, then $\policy$ \emph{strictly} dominates~$\policy'$:
\begin{definition}[Strict dominance]
    \label{def:rmdp:strict_dominance}
    Let $\policy, \policy' \in \policySpace$ be policies for RMDP $\rmdp$.
    Policy $\policy$ dominates $\policy'$, written $\policy >_{\cP} \policy'$, if and only if $\policy \geq_{\cP} \policy'$ and there exists $P' \in \cP$ s.t. $\rho^\policy_{P'} > \rho^{\policy'}_{P'}$.
\end{definition}
We say that the policy $\policy$ is (strictly) dominant in the RMDP $\rmdp$ if it (strictly) dominates every other policy $\policy' \in \policySpace \setminus \{\policy\}$.
Next, we say that a policy is \emph{\besteffort} if there is no other policy that dominates it.
\begin{definition}[\Besteffort]
    \label{def:rmdp:besteffort}
    A policy $\policy \in \policySpace$ for the RMDP $\rmdp$ is \emph{\besteffort} if there is no $\policy' \in \policySpace$ such that $\policy' >_{\cP} \policy$.
    We denote by $\policySpace_\BE \subseteq \policySpace$ the set of all \besteffort policies.
\end{definition}
A policy is \besteffort if there is no other policy that is strictly better for some $P \in \cP$ and not worse for all $P \in \cP$.
In other words, a \besteffort policy cannot be improved without also decreasing the expected return under some transition function.
\Besteffort policies are \emph{incomparable} with respect to the dominance order, \ie for all $\policy, \policy' \in \policySpace_\BE,\ \policy \neq \policy'$, we have both $\policy \not\geq_\cP \policy'$ and $\policy \not\leq_\cP \policy'$.

\pgfmathdeclarefunction{funA}{3}{%
\pgfmathparse{#3 * #1 * #2 + #3 * (1-#1) * (2*#2)}
}
\pgfmathdeclarefunction{funB}{3}{%
\pgfmathparse{#3 * #1 * (1-#2) + #3 * (1-#1) * (1-2*#2)}
}
\pgfmathdeclarefunction{funV}{3}{%
\pgfmathparse{(funA(#1,#2,#3)/(1-0.5*#3)) / (1-(0.5*#3)/(1-0.5*#3) * funA(#1,#2,#3) - funB(#1,#2,#3))}
}

\begin{figure}[t!]
	\centering{
            \scalebox{0.84}{%
            \begin{tikzpicture}[
    state/.append style={inner sep=0pt, inner sep=0pt, minimum size=20pt}, 
    >=stealth,
    bobbel/.style={minimum size=1mm,inner sep=0pt,fill=black,circle},
    mynode/.style={rectangle,fill=white,anchor=center}]

    \draw [draw=none, use as bounding box] (1mm,-20mm) rectangle (42mm,13.5mm);

    \node[state] (s1) at (1,0) {$s_1$};
    \node[state] (s2) at ($(s1) + (2.5,0)$) {$s_2$};
    \draw[<-] (s1.west) -- node[above, pos=0.6]{$\initState$} +(-0.5,0);
    
    \node[bobbel] (s1a1) at ($(s1) + (1,0.4)$) {};
    \draw (s1) -- node[above]{$a_1$} (s1a1);
    \draw (s1a1) edge[->] node[above]{$\xi$} (s2);
    \draw (s1a1) edge[->, bend right = 80, looseness=1.5] node[above]{$1-\xi$} (s1.north);
    
    \node[bobbel] (s1a2) at ($(s1) + (1,-0.4)$) {};
    \draw (s1) -- node[below]{$a_2$} (s1a2);
    \draw (s1a2) edge[->] node[below]{$2\xi$} (s2);
    \draw (s1a2) edge[->, bend left = 80, looseness=1.5] node[below]{$1-2\xi$} (s1.south);

    \node[bobbel] (s2a1) at ($(s2) + (-0.6,-1.5)$) {};
    \draw (s2) -- node[left,pos=0.7]{$a$} (s2a1);
    \draw (s2a1) edge[->, out=190, in=-125, looseness=1.5] node[below, pos=0.15]{$0.5$} (s1.south west);
    \draw (s2a1) edge[->, bend right = 40, looseness=1] node[below, pos=0.3, yshift=-0.1cm]{$0.5$} (s2);

    \node[state, draw=none] (invisible) at (1,-1.8) {};
\end{tikzpicture}
            \begin{tikzpicture}
    \draw [draw=none, use as bounding box] (-8mm,-4.3mm) rectangle (36.5mm,30.5mm);
        
    \begin{axis}[
        view={30}{30},
        width=5cm,
        height=4.6cm,
        xlabel=$\beta$, 
        ylabel=$\xi$, 
        zlabel=$\rho^\beta_\xi$, 
        x label style={rotate=0, yshift=0.4cm, xshift=-0.3cm},
        y label style={rotate=0, yshift=0.3cm, xshift=-0.1cm},
        z label style={rotate=-90, xshift=0.2cm, yshift=0cm},
        ztick={0,3,6},
        zmin=0,
        ]
        \addplot3[
          surf,
          domain=0:1,
          domain y=0:0.5,
        ]
        {funV(x,y,0.9)};

        \addplot3 [
            domain=0:0.5, 
            samples y=1, 
            darkgreen,
            smooth,
            ultra thick,
            dashdotted,
        ] 
        ({1}, {x}, {funV(1,x,0.9)});

        \addplot3 [
            domain=0:0.5, 
            samples y=1, 
            purple!80!black,
            smooth,
            ultra thick,
            loosely dashed,
        ] 
        ({0}, {x}, {funV(0,x,0.9)});

        \addplot3[
            only marks,
            mark=*,
            mark size=1.5pt,
            color=black,
        ] coordinates {
            (1,0,0)
        };
        \addplot3[
            only marks,
            mark=square*,
            mark size=1.5pt,
            color=black,
        ] coordinates {
            (1,0.1667,0)
        };
        \addplot3[
            only marks,
            mark=diamond*,
            mark size=2pt,
            color=black,
        ] coordinates {
            (1,0.3333,0)
        };
        \addplot3[
            only marks,
            mark=triangle*,
            mark size=2pt,
            color=black,
        ] coordinates {
            (1,0.5,0)
        };
        
      \end{axis}
\end{tikzpicture}
            }
        }
	\caption{\emph{Left:} An RMDP with two states, where the policy is fully defined by the probability $\beta \coloneqq \policy(s_1,a_1)$ of choosing $a_1$ in $s_1$.
    The reward function is defined as $R(s_1,a_1) = R(s_1,a_2) = 0$ and $R(s_2, a) = 1$.
    \emph{Right:} The expected return $\rho^\beta_\xi$ as a function of $\beta$ and $\xi \in [0,0.5]$.
    All policies are \optimalrobust, but only the policy with $\beta = 0$ is \besteffort.}
	\label{fig:rmdp1}
\end{figure}
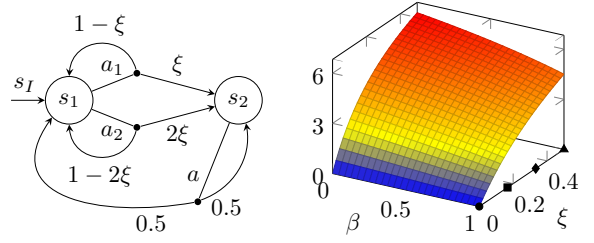

\subsection{\OptimalRobust Does Not Imply \BestEffort}
\label{sec:notallBE}

In general, \optimalrobust policies for RMDPs are not \besteffort, as shown by the two-state RMDP in \cref{fig:rmdp1} with reward function $R(s_1,a_1) = R(s_1,a_2) = 0$, $R(s_2, a) = 1$.
In this RMDP, the only action choice is between $a_1$ and $a_2$ in state $s_1$, so the stochastic policy $\policy$ is completely defined by the probability $\beta \coloneqq \policy(s_1,a_1) \in [0,1]$ of choosing $a_1$ in $s_1$.
Similarly, the (\srectangular) uncertainty set $\cP$ is fully defined by the parameter $\xi \in \Xi = [0,0.5]$.
As a result, we may simplify the notations from the preceding definitions by replacing $\pi$ with $\beta$, $P$ with $\xi$, and $\cP$ with $\Xi$.
The value function $V^\beta_\xi \colon \{s_1, s_2\} \to \RR$ depends on $\beta$ and $\xi$ and is defined as
\begin{align*}
    V^\beta_\xi(s_1) &= \gamma \big( \beta \big[\xi V^\beta_\xi(s_2) + (1-\xi) V^\beta_\xi(s_1) \big]
    \\
    &\qquad + (1-\beta) \big[ 2 \xi V^\beta_\xi(s_2) + (1-2 \xi) V^\beta_\xi(s_1) \big] \big),
    \\
    V^\beta_\xi(s_2) &= 1 + \gamma \big[ 0.5 V^\beta_\xi(s_1) + 0.5 V^\beta_\xi(s_2) \big].
\end{align*}
Solving for the value $\rho^\beta_\xi = V^\beta_\xi(s_1)$ with $\gamma = 0.9$ yields the surface in the right half of \cref{fig:rmdp1}.
This surface shows the expected return $\rho^\beta_\xi$ for all $\beta \in [0,1]$ and $\xi \in [0,0.5]$.
The worst-case expected return is zero and is attained at $\xi=0$, regardless of the value of $\beta$.
Thus, all policies in this RMDP are \optimalrobust.
Nevertheless, only the policy for $\beta = 0$ is \besteffort, because for all $\xi > 0$, the expected return for $\beta = 0$ is \emph{strictly higher} than for any $\beta > 0$.
In other words, the policy defined as $\policy(s_1,a_1) = 0$, $\policy(s_1,a_2) = 1$ \emph{strictly dominates} all other policies $\policy' \neq \policy$, that is, $\policy >_\Xi \policy'$.

\subsection{Problem Statement}
Above, we have shown that not all \optimalrobust policies are also \besteffort.
This observation motivates the next core problem, which we shall solve in the remainder of this paper.

\begin{problem} %
    \label{prob:statement}
    For a given RMDP $\rmdp$, compute a policy $\policy^\star_\BE$ that is \optimalrobust and \besteffort, \ie
    \[
    \policy^\star_\BE \in \argmax_{\policy \in \policySpace} \rho^\policy_\cP \,\, \text{such that} \,\, \nexists \policy' \in \policySpace \setminus \{\policy^\star_\BE\}, \, \policy' >_\cP \policy^\star_\BE.
    \]
\end{problem}

We call a policy that satisfies \cref{prob:statement} \emph{\optimalrobust \besteffort} (\RBE).
In RMDPs with multiple \optimalrobust policies (as in \cref{fig:rmdp1}), the \besteffort criterion offers a principled tiebreaker, favoring a policy that attains a maximal performance under non-adversarial transition probabilities.

\begin{remark}%
    \label{fn:proofs}For clarity and due to space constraints, all proofs are provided in \cref{appendix:proofs}.
\end{remark}

\pgfmathdeclarefunction{funVnoXi}{3}{%
\pgfmathparse{((#3 * #2) / (1 - #3/2)) / (1 - (#3^2/2 * #2) / (1-#3/2) - #3 * #1)}
}

\pgfmathdeclarefunction{funVdelA}{3}{%
\pgfmathparse{((#3^2 * #2) / (1-#3/2)) / (1 - (#3^2/2 * #2) / (1-#3/2) - #3 * #1)^2}
}

\pgfmathdeclarefunction{funVdelB}{3}{%
\pgfmathparse{( (#3/(1-#3/2) * (1 - (#3^2/2 * #2) / (1-#3/2) - #3 * #1) - #3 / (1-#3/2) * #2 * -(#3^2/2)/(1-#3/2) ) / (1 - (#3^2/2 * #2) / (1-#3/2) - #3 * #1)^2}
}

\pgfmathdeclarefunction{funVdelDir}{5}{%
\pgfmathparse{#4 * funVdelA(#1,#2,#3) + #5 * funVdelB(#1,#2,#3)}
}

\def\tikzSamples{30}
\def\tikzOpacity{0.6}
\def\tikzArrowColor{black} %

\pgfplotsset{
    tikzstyle/.style={
        view={-35}{25},
        width=5cm,
        height=5cm,
        x label style={rotate=15, yshift=0.2cm, xshift=-0.25cm},
        y label style={rotate=-24, yshift=0.38cm, xshift=0.5cm},
        z label style={rotate=-90, anchor=south, xshift=15pt, yshift=1cm},
        xtick={0,0.5,1},
        ytick={0,0.5,1},
        ztick={0,3,6},
        zmin=0,
        zmax=7.5,
        restrict z to domain=0:1000,
        point meta min=0,
        point meta max=7,
    }
}

\section{Representation of Robust Value Functions}
\label{sec:value_functions}

We first introduce a change in perspective to the value function, which we will use in \cref{sec:characterization} to solve \cref{prob:statement}.
Instead of using shared variables to represent dependencies between probabilities (such as $\xi$ in \cref{fig:rmdp1}), we label each transition with its own probability $p(s,a)(s')$ and encode dependencies in the uncertainty set $\cP$.
For instance, we can equally represent the RMDP in \cref{fig:rmdp1} using the uncertainty set
\begin{align*}
{}&{}\cP_{s_1} = \{ (P_{s_1} \colon A \to \dist{S}) {}:{} 
	p(a_1)(s_1) + p(a_1)(s_2) = 1, 
	\\ 
    & \,\,\,\, p(a_2)(s_1) + p(a_2)(s_2) = 1, \,\, 
      p(a_1)(s_2) = 0.5 p(a_2)(s_2) \}.
\end{align*}
We aim to reason about the expected return when the transition function is fixed \emph{in all but one state}.
To this end, we introduce the notion of a partial transition function.

\begin{definition}[Partial transition function]
    \label{def:partial_P}
    Let $\cP$ be an \srectangular uncertainty set and let $\bar{s} \in S$ be a state.
    A partial transition function $P_{-\bar{s}}$ for state $\bar{s}$ is defined as
    $
    P_{-\bar{s}} = \bigtimes_{s \in S \setminus \{ \bar{s} \}} P_s,
    $
    where $P_s \in \cP$ for all $s \in S \setminus \{ \bar{s} \}$.
\end{definition}

A partial transition function has the form $P_{-\bar{s}} \colon (\States \setminus \{\bar{s}\}) \times \Actions \to \dist{S}$. %
Thus, to complete $P_{-\bar{s}}$ with $P_{\bar{s}} \in \cP_{\bar{s}}$ for the missing state $\bar{s}$, we take the product $P_{-\bar{s}} \times P_{\bar{s}}$.
Similarly, we write $P_{-\bar{s}} \times \cP_{\bar{s}}$ for the set of all completions, such that $P_{-\bar{s}} \times P_{\bar{s}} \in P_{-\bar{s}} \times \cP_{\bar{s}}$.
Using this notation, we define the following value function in a fixed state $\bar{s}$, when the transition probabilities are fixed in all states but $\bar{s}$.

\begin{definition}[Parametric value function]
    \label{def:parametric_V}
    The value in state $\bar{s}$ is a function of the completion $P_{\bar{s}} \in \cP_{\bar{s}}$ of the partial transition function $P_{-\bar{s}}$ and is defined as $Z^\policy_{P,\bar{s}}(P_{\bar{s}}) = V^\policy_{P_{-\bar{s}} \times P_{\bar{s}}}(\bar{s})$.
\end{definition}

\begin{example}
Consider again the RMDP from \cref{fig:rmdp1} with the policies given by $\beta = 0$ and $\beta = 1$.
The value functions $Z^\policy_{P,s_1}$ for these two policies are, respectively, shown in the left and right halves of \cref{fig:rmdp1:value_function2}.
For $\beta=1$, the value depends only of the transition probabilities related to action $a_1$ (and for $\beta=0$ only of those related to $a_2$).
In both plots, the dashed line in the bottom plane shows the set of valid distributions in $\cP_{s_1}$, where the marked points coincide with those on the $\xi$-axis in \cref{fig:rmdp1}.
The green (left) and purple (right) curved lines show the expected return for the policies with $\beta=1$ and $\beta=0$, respectively, as a function of $\xi$ and coincide with the lines of the same color in \cref{fig:rmdp1}.
As in \cref{fig:rmdp1}, we observe that, for any $\xi > 0$, the policy for $\beta=1$ strictly dominates all other policies and is, thus, \besteffort.
\end{example}

\begin{figure}[t!]
	\centering
            \scalebox{0.85}{\begin{tikzpicture}
    \draw [draw=none, use as bounding box] (-4mm,-9.2mm) rectangle (37mm,34.8mm);
        
    \begin{axis}[tikzstyle,
        xlabel={$P(s_1,a_1)(s_1)$}, 
        ylabel={$P(s_1,a_1)(s_2)$}, 
        zlabel={$Z^\pi_{P,s_1}$}, 
        ]

        \addplot3 [
            domain=0.5:1, 
            samples y=1, 
            black!90,
            smooth,
            thick,
            dashed,
        ] 
        ({x}, {1-x}, {0});
        \addplot3[
            thick,
            black!90,
            smooth,
            dashed,
        ] coordinates {
            ({0.5},{0.5},{0})
            ({0.5},{0.5},{funVnoXi(0.5,0.5,0.9)})
        };
        \addplot3[
            thick,
            black!90,
            smooth,
            dashed,
        ] coordinates {
            ({0.6667},{0.3333},{0})
            ({0.6667},{0.3333},{funVnoXi(0.6667,0.3333,0.9)})
        };
        \addplot3[
            thick,
            black!90,
            smooth,
            dashed,
        ] coordinates {
            ({0.8333},{0.1667},{0})
            ({0.8333},{0.1667},{funVnoXi(0.8333,0.1667,0.9)})
        };
        
        \addplot3[
            only marks,
            mark=*,
            mark size=1.5pt,
            color=black!90,
            opacity=0.5
        ] coordinates {
            ({0.5},{0.5},{0})
        };
        \addplot3[
            only marks,
            mark=square*,
            mark size=1.5pt,
            color=black!90,
            opacity=0.5
        ] coordinates {
            ({0.6667},{0.3333},{0})
        };
        \addplot3[
            only marks,
            mark=diamond*,
            mark size=2.5pt,
            color=black!90,
            opacity=0.5
        ] coordinates {
            ({0.8333},{0.1667},{0})
        };

        \addplot3[
          surf,
          domain=0:1,
          domain y=0:1,
          samples=\tikzSamples,
          opacity=\tikzOpacity,
        ]
        {funVnoXi(x,y,0.9)};

        \addplot3[
            thick,
            black!90,
            smooth,
            densely dotted,
        ] coordinates {
            ({1},{0},{funVnoXi(0.5,0.5,0.9)})
            ({0},{1},{funVnoXi(0.5,0.5,0.9)})
        };

        \addplot3 [
            domain=0.5:1, 
            samples y=1, 
            darkgreen,
            smooth,
            ultra thick,
            dashdotted,
        ] 
        ({x}, {1-x}, {funVnoXi(x,(1-x),0.9)});
        \addplot3 [
            domain=0:0.5, 
            samples y=1, 
            black!90,
            smooth,
            densely dotted,
        ] 
        ({x}, {1-x}, {funVnoXi(x,(1-x),0.9)});

    \addplot3[
        only marks,
        mark=*,
        mark size=2.5pt,
        color=darkgreen,
    ] coordinates {
        ({0.5},{0.5},{funVnoXi(0.5,0.5,0.9)})
    };
    \addplot3[
        only marks,
        mark=square*,
        mark size=2.5pt,
        color=darkgreen,
    ] coordinates {
        ({0.6667},{0.3333},{funVnoXi(0.6667,0.3333,0.9)})
    };
    \addplot3[
        only marks,
        mark=diamond*,
        mark size=3pt,
        color=darkgreen,
    ] coordinates {
        ({0.8333},{0.1667},{funVnoXi(0.8333,0.1667,0.9)})
    };
    \addplot3[
        only marks,
        mark=triangle*,
        mark size=3pt,
        color=darkgreen,
    ] coordinates {
        ({1},{0},{funVnoXi(1,0,0.9)})
    };

    \node[anchor=west] at (axis cs:0.5,0.5,{funVnoXi(0.5,0.5,0.9)}) [xshift=-0.2cm, yshift=0.3cm] {$\xi=0.5$};

  \end{axis}
\end{tikzpicture}%
\hspace{0.3cm}
\begin{tikzpicture}
    \draw [draw=none, use as bounding box] (-4mm,-9.2mm) rectangle (37mm,34.8mm);
        
    \begin{axis}[tikzstyle,
        xlabel={$P(s_1,a_2)(s_1)$}, 
        ylabel={$P(s_1,a_2)(s_2)$}, 
        zlabel={$Z^{\pi'}_{P,s_1}$},
        ]

        \addplot3 [
            domain=0:1, 
            samples y=1, 
            black!90,
            smooth,
            thick,
            dashed,
        ] 
        ({x}, {1-x}, {0});
        \addplot3[
            thick,
            black!90,
            smooth,
            dashed,
        ] coordinates {
            ({0},{1},{0})
            ({0},{1},{funVnoXi(0,1,0.9)})
        };
        \addplot3[
            thick,
            black!90,
            smooth,
            dashed,
        ] coordinates {
            ({0.3333},{0.6667},{0})
            ({0.3333},{0.6667},{funVnoXi(0.3333,0.6667,0.9)})
        };
        \addplot3[
            thick,
            black!90,
            smooth,
            dashed,
        ] coordinates {
            ({0.6667},{0.3333},{0})
            ({0.6667},{0.3333},{funVnoXi(0.6667,0.3333,0.9)})
        };
        
        \addplot3[
            only marks,
            mark=*,
            mark size=1.5pt,
            color=black!90,
            opacity=0.5
        ] coordinates {
            ({0},{1},{0})
        };
        \addplot3[
            only marks,
            mark=square*,
            mark size=1.5pt,
            color=black!90,
            opacity=0.5
        ] coordinates {
            ({0.3333},{0.6667},{0})
        };
        \addplot3[
            only marks,
            mark=diamond*,
            mark size=2.5pt,
            color=black!90,
            opacity=0.5
        ] coordinates {
            ({0.6667},{0.3333},{0})
        };

        \addplot3[
          surf,
          domain=0:1,
          domain y=0:1,
          samples=\tikzSamples,
          opacity=\tikzOpacity,
        ]
        {funVnoXi(x,y,0.9)};

        \addplot3[
            thick,
            black!90,
            smooth,
            densely dotted,
        ] coordinates {
            ({1},{0},{funVnoXi(0,1,0.9)})
            ({0},{1},{funVnoXi(0,1,0.9)})
        };

        \addplot3 [
            domain=0:1, 
            samples y=1, 
            purple!80!black,
            smooth,
            ultra thick,
            loosely dashed,
        ] 
        ({x}, {1-x}, {funVnoXi(x,(1-x),0.9)});

    \addplot3[
        only marks,
        mark=*,
        mark size=2.5pt,
        color=purple!80!black,
    ] coordinates {
        ({0},{1},{funVnoXi(0,1,0.9)})
    };
    \addplot3[
        only marks,
        mark=square*,
        mark size=2.5pt,
        color=purple!80!black,
    ] coordinates {
        ({0.3333},{0.6667},{funVnoXi(0.3333,0.6667,0.9)})
    };
    \addplot3[
        only marks,
        mark=diamond*,
        mark size=3pt,
        color=purple!80!black,
    ] coordinates {
        ({0.6667},{0.3333},{funVnoXi(0.6667,0.3333,0.9)})
    };
    \addplot3[
        only marks,
        mark=triangle*,
        mark size=3pt,
        color=purple!80!black,
    ] coordinates {
        ({1},{0},{funVnoXi(1,0,0.9)})
    };

    \node[anchor=west] at (axis cs:0,1,{funVnoXi(0,1,0.9)}) [yshift=0.2cm] {$\xi=0.5$};

  \end{axis}
\end{tikzpicture}}
        \caption{The value function $Z^\policy_{P,s_1}$ in state $s_1$ for the RMDP from \cref{fig:rmdp1}, shown for the policies with $\beta=1$ (left) and $\beta=0$ (right).
        The curved lines show the expected return as the parameter $\xi$ in \cref{fig:rmdp1} ranges from $0$ to $0.5$ (the line markers correspond with those on the $\xi$-axis in \cref{fig:rmdp1}).
    }
	\label{fig:rmdp1:value_function2}
\end{figure}
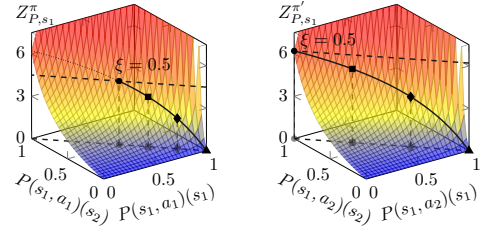

\section{Finding \OptimalRobust \BestEffort Policies}
\label{sec:characterization}
We now use the representation of the value function from \cref{def:parametric_V} to determine whether an \optimalrobust policy for a fixed RMDP $\rmdp = \RMDP$ is \besteffort.

\subsection{Existence of ORBE Policies}
\label{sec:char:existence}

We first establish in \cref{th:rmdp:existence:robustbesteffort} that, for any \srectangular RMDP, the set of \RBE policies is nonempty.
Intuitively, this result holds because the dominance relation imposes a partial order over policies, ensuring the existence of maximal (\ie \besteffort) ones that under an adversarial environment must also be \optimalrobust.
Furthermore, any optimal robust policy $\policy \in \policySpace^\star$ cannot be dominated by a policy that is not optimal robust.
Thus, an \RBE policy always exists.

\begin{restatable}[Existence of \RBE policies]{theorem}{orbeexistence}
\label{th:rmdp:existence:robustbesteffort}
For any RMDP, the intersection of the sets of \emph{\optimalrobust policies} $\policySpace^\star$ and \emph{\besteffort policies} $\policySpace_\BE$ is nonempty.
\end{restatable}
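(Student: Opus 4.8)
The plan is to exploit two facts: strict dominance can never lead \emph{out} of the set $\policySpace^\star$ of optimal robust policies, and a suitable scalar aggregate of the returns over $\cP$ attains its maximum on $\policySpace^\star$ at a policy that is necessarily undominated. First I would record the key monotonicity observation. If $\policy' \geq_\cP \policy$, then $\rho^{\policy'}_P \geq \rho^\policy_P$ for every $P \in \cP$, and taking the minimum over $P$ gives $\rho^{\policy'}_\cP \geq \rho^\policy_\cP$. Hence if $\policy \in \policySpace^\star$, so that $\rho^\policy_\cP = \rho^\star_\cP$ is maximal, then $\rho^{\policy'}_\cP = \rho^\star_\cP$ as well, i.e.\ $\policy' \in \policySpace^\star$: any policy dominating an optimal robust policy is itself optimal robust. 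This reduces the goal to producing a policy in $\policySpace^\star$ that is maximal under $>_\cP$ \emph{within} $\policySpace^\star$, since any such policy cannot be strictly dominated by any policy at all (a dominator would lie in $\policySpace^\star$, contradicting maximality), and is therefore \besteffort.

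Next I would set up the topology so that maxima are attained. The policy space $\policySpace \cong \prod_{s \in S} \dist{A}$ is compact, and $\rho^\policy_P = \dotp{\initState}{(I - \gamma P^\policy)^{-1} R^\policy}$ is jointly continuous in $(\policy, P)$ and uniformly bounded by $\max_{s,a} R(s,a)/(1-\gamma)$ (the inverse exists because $\gamma < 1$ makes $I - \gamma P^\policy$ nonsingular for any stochastic $P^\policy$). Assuming $\cP$ compact, as is implicit in writing the minimum in \cref{eq:RMDP:robust_return}, the map $\policy \mapsto \rho^\policy_\cP = \min_{P \in \cP} \rho^\policy_P$ is upper semicontinuous as a pointwise minimum of continuous functions, so its superlevel set $\policySpace^\star = \{\policy : \rho^\policy_\cP = \rho^\star_\cP\}$ is closed, hence a nonempty compact subset of $\policySpace$ (nonempty because the maximum defining $\rho^\star_\cP$ is attained on the compact $\policySpace$).

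The crux is to single out a maximal element of $\policySpace^\star$. Rather than invoking Zorn's lemma, which would require exhibiting upper bounds for arbitrary dominance chains, I would use a scalarization that is strictly monotone with respect to $>_\cP$. Fix a countable dense subset $\{P_n\}_{n \in \NN} \subseteq \cP$ (it exists since $\cP$ is a separable metric space) and define $\Phi(\policy) \coloneqq \sum_{n \in \NN} 2^{-n} \rho^\policy_{P_n}$. By the Weierstrass $M$-test this series converges uniformly, so $\Phi$ is continuous on the compact set $\policySpace^\star$ and attains its maximum there at some $\policy^\star \in \policySpace^\star$. I claim $\policy^\star$ is \besteffort. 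Suppose not: some $\policy'$ satisfies $\policy' >_\cP \policy^\star$, so $\rho^{\policy'}_P \geq \rho^{\policy^\star}_P$ for all $P \in \cP$ with strict inequality at some $P'$. By the monotonicity observation $\policy' \in \policySpace^\star$. Since $P \mapsto \rho^{\policy'}_P - \rho^{\policy^\star}_P$ is continuous and strictly positive at $P'$, it is strictly positive on a neighborhood of $P'$, which contains some $P_n$ by density; that term of $\Phi$ strictly increases while every other term is non-decreasing, giving $\Phi(\policy') > \Phi(\policy^\star)$ and contradicting maximality on $\policySpace^\star$. Hence $\policy^\star \in \policySpace^\star \cap \policySpace_\BE$.

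I expect the main obstacle to be the analytic bookkeeping rather than the conceptual skeleton: establishing joint continuity and boundedness of $\rho^\policy_P$ and the compactness of $\policySpace^\star$, and then arguing that a single strict pointwise gain at $P'$ propagates, via continuity in $P$ and density of $\{P_n\}$, to a strict gain of the aggregate $\Phi$. The monotonicity observation and the reduction to maximality within $\policySpace^\star$ are the load-bearing ideas; the scalarization is merely the device that turns the abstract order-theoretic claim ``a maximal element exists'' into a concrete attained maximum on a compact set.
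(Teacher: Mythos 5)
Your proposal is correct, and it takes a genuinely different route from the paper's proof. The paper first establishes that $\policySpace_\BE \neq \emptyset$ as a standalone result by applying Zorn's Lemma to the dominance poset over the full policy space (arguing that every chain of policies has an upper bound via pointwise suprema of the return functions, realized by compactness and continuity), and then handles the intersection with $\policySpace^\star$ by a case analysis on whether the \optimalrobust policy is unique in its robust value. Your argument instead makes explicit a fact the paper's case analysis only implicitly relies on---that any dominator of an \optimalrobust policy is itself \optimalrobust, so it suffices to find a dominance-maximal element \emph{within} $\policySpace^\star$---and then replaces the order-theoretic machinery with a concrete scalarization: the weighted series $\Phi(\policy) = \sum_n 2^{-n} \rho^\policy_{P_n}$ over a countable dense subset of $\cP$, which is continuous on the compact set $\policySpace^\star$ and strictly increasing under strict dominance (a strict gain at one $P'$ propagates by continuity to a neighborhood, hence to some $P_n$). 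What your approach buys is elementarity and rigor at the weakest point of the paper's argument: the claim that the pointwise supremum of an arbitrary (possibly uncountable) chain is realized by a single policy is delicate, whereas your maximizer of $\Phi$ exists by Weierstrass alone; your scalarization, applied to all of $\policySpace$ rather than $\policySpace^\star$, would also recover the paper's standalone existence result for \besteffort policies. What the paper's route buys is the poset perspective mirroring the game-theoretic literature it builds on, and a reusable structural lemma (maximal elements of the dominance order exist) that is quoted again in later proofs such as that of \cref{thm:char:maxmin_unique}. One small caveat: both arguments need compactness of $\cP$ (yours for density, upper semicontinuity, and attainment; the paper's for its chain argument), which the paper never states as an assumption but uses implicitly by writing $\min$ in \cref{eq:RMDP:robust_return}; you are right to flag this explicitly.
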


Note that the existence of \besteffort policies in RMDPs does not directly follow from the results for synthesis in stochastic environments in~\citet{10175747}; see~\Cref{sec:related}.

\subsection{Characterizing ORBE policies}
In \cref{thm:ORBE_char}, we provide a sufficient condition for \RBE policies, used as a foundation in the remainder of the section.

\begin{theorem}[\RBE policy]
    \label{thm:ORBE_char}
    Given an \optimalrobust policy $\policy^\star \in \policySpace^\star \coloneqq \argmax_{\policy \in \policySpace} \rho^\policy_\cP$, if there exists $P \in \cP$ such that $\rho^{\policy^\star}_P > \rho^{\policy'}_P$ for all $\policy' \in \policySpace^\star \setminus \{\policy^\star\}$, then $\policy^\star$ is \RBE.
\end{theorem}

\begin{proof}
    First, $\policy^\star$ is \optimalrobust by definition.
    Second, to show that $\policy^\star$ is also \besteffort, we must show there is no other policy $\policy' \in \policySpace \setminus \{\policy^\star\}$ that strictly dominates $\policy^\star$.
    By construction, $\policy^\star$ cannot be dominated by any $\policy' \in \policySpace^\star \setminus \{\policy^\star\}$.
    For any other policy $\policy'' \in \policySpace \setminus \policySpace^\star$, we have $\rho^{\policy^\star}_\cP > \rho^{\policy''}_\cP$ and, moreover, as $\rho^{\policy^\star}_\cP = \min_{P \in \cP} \rho^\policy_P$ (cf.~\cref{eq:RMDP:robust_return}), it holds that $\rho^{\policy^\star}_{P'} \geq \rho^{\policy^\star}_\cP$ for all $P' \in \cP$.
    By letting $P' \in \argmin_{P \in \cP} \rho^{\policy''}_P$, we thus obtain $\rho^{\policy^\star}_{P'} \geq \rho^{\policy^\star}_\cP > \rho^{\policy''}_\cP = \rho^{\policy''}_{P'}$, which proves that $\policy'' \not>_{\cP} \policy^
    \star$.
    Thus, the policy $\policy^\star$ is \RBE.
\end{proof}

In the remainder of this section, we use \cref{thm:ORBE_char} to derive conditions under which an \optimalrobust policy is also \besteffort (and thus \RBE).
First of all, if an \optimalrobust policy is unique, then this policy is also \besteffort.
\begin{restatable}%
{corollary}{maxminUnique}
    \label{corr:char:maxmin_unique}
    Let $\policySpace^\star = \argmax_{\policy \in \policySpace} \rho^\policy_\cP$ be the set of \optimalrobust policies.
    If $\policySpace^\star$ is a singleton, then $\policy^\star \in \policySpace^\star$ is \RBE.
\end{restatable}

\subsubsection{ORBE via optimistic RVI.}
The second observation is that, if an \optimalrobust policy is not unique but further optimizing via robust value iteration (RVI) for the \emph{optimistic} (\ie maximizing) transition function does yield a unique optimum, then the resulting policy is also \besteffort.
\begin{restatable}%
{corollary}{maxmaxUnique}
    \label{corr:char:maxmax_unique}
    Let $\check\policySpace^\star = \argmax_{\policy \in \policySpace} \rho^\policy_\cP$ and let $\hat\policySpace^\star = \argmax_{\policy \in \check\policySpace^\star} \max_{P \in \cP} \rho^\policy_P$ be the set of policies that (within $\check\policySpace^\star$) maximize the expected return under the maximizing $P \in \cP$.
    If $\hat\policySpace^\star$ is a singleton, then $\policy^\star \in \hat\policySpace^\star$ is \RBE.
\end{restatable}

\begin{example}
    Consider again the RMDP in \cref{fig:rmdp1}.
    Even though all policies are optimal robust, only the policy for $\beta = 0$ is optimal under the maximizing transition function (which is attained for $\xi = 0.5$).
    Thus, the policy for $\beta = 0$, \ie always choosing action $a_2$, is \RBE.
\end{example}

\subsubsection{ORBE via derivatives.}
Another way to determine if a policy is \besteffort is to reason about the derivative of the value function.
Let $\nabla_{\mathbf{v}} f(x) = \mathbf{v}^\top \cdot \frac{\partial f(x)}{\partial x}$ be the \emph{directional derivative} of the function $f : \RR^n \to \RR$ in the direction $\mathbf{v} \in \RR^n$.
Recall from \cref{def:parametric_V} that $Z^\policy_{P,\bar{s}}$ is the value function in state $\bar{s}$ when the transition function is fixed in all states except $\bar{s}$.
The next result states that, if an \optimalrobust policy $\policy^\star$ leads, for every state $\bar{s}$, to a \emph{strictly higher} derivative of $Z^\policy_{P,\bar{s}}(P_{\bar{s}})$ than all other \optimalrobust policies, then $\policy^\star$ is \besteffort.
This derivative can be taken in any direction such that the perturbed $P_{\bar{s}}$ is still within the uncertainty set $\cP_{\bar{s}}$.

\begin{restatable}{corollary}{derivative}
    \label{corr:char:derivative}
    Let $\bar\policy \in \policySpace^\star = \argmax_{\policy \in \policySpace} \rho^{\bar\policy}_\cP$ be an \optimalrobust policy with minimizer $P^\star \in \argmin_{P \in \cP} \rho^{\bar\policy}_P$.
    Define $\policySpace^\star_{(1)} = \argmax_{\policy \in \policySpace^\star} \rho^\policy_{P^\star}$ and pick a policy $\policy^\star \in \policySpace^\star_{(1)}$.
    The policy $\policy^\star$ is \RBE if, for all states $\bar{s}$, there exists a vector $\mathbf{v} \in \RR^{|S|}$ such that $\exists \epsilon > 0, \, P^\star_{\bar{s}} + \epsilon \mathbf{v} \in \cP_{\bar{s}}$ and
    \begin{equation}
        \nabla_{\mathbf{v}} Z^{\policy^\star}_{P^\star,\bar{s}}(P^\star_{\bar{s}}) > \nabla_{\mathbf{v}} Z^{\policy'}_{P^\star,\bar{s}}(P^\star_{\bar{s}}) \enskip \forall \policy' \in \policySpace^\star_{(1)} \setminus \{ \policy^\star \}.
    \end{equation}
\end{restatable}
Intuitively, the condition that there exists $\epsilon > 0$ such that $P^\star_{\bar{s}} + \epsilon \mathbf{v} \in \cP_{\bar{s}}$ encodes that the vector $\mathbf{v}$ at the minimizing transition function $P^\star$ points inside the uncertainty set~$\cP_{\bar{s}}$.

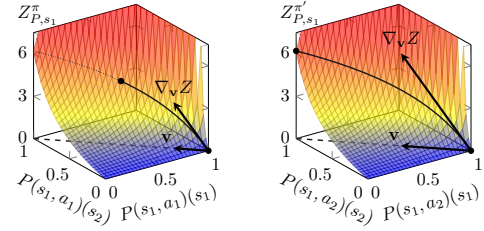
\begin{figure}[t!]
	\centering
            \scalebox{0.85}{\begin{tikzpicture}
    \draw [draw=none, use as bounding box] (-4mm,-9.2mm) rectangle (37mm,34.8mm);
        
    \begin{axis}[tikzstyle,
        xlabel={$P(s_1,a_1)(s_1)$}, 
        ylabel={$P(s_1,a_1)(s_2)$},
        zlabel={$Z^\pi_{P,s_1}$}, 
        ]

        \addplot3 [
            domain=0:1, 
            samples y=1, 
            black!90,
            smooth,
            thick,
            dashed,
        ] 
        ({x}, {1-x}, {0});

        \addplot3[
          surf,
          domain=0:1,
          domain y=0:1,
          samples=\tikzSamples,
          opacity=\tikzOpacity,
        ]
        {funVnoXi(x,y,0.9)};

        \addplot3 [
            domain=0.5:1, 
            samples y=1, 
            black!90,
            smooth,
            thick,
        ] 
        ({x}, {1-x}, {funVnoXi(x,(1-x),0.9)});
        \addplot3 [
            domain=0:0.5, 
            samples y=1, 
            black!90,
            smooth,
            densely dotted,
        ] 
        ({x}, {1-x}, {funVnoXi(x,(1-x),0.9)});

    \addplot3[
        only marks,
        mark=*,
        mark size=1.5pt,
        color=black,
    ] coordinates {
        ({0.5},{0.5},{funVnoXi(0.5,0.5,0.9)})
        ({1},{0},{funVnoXi(1,0,0.9)})
    };

    \addplot3[
        very thick,
        \tikzArrowColor,
        smooth,
        -stealth,
    ] coordinates {
        (1,0,0)
        (0.8,0.2,0)
    };
    \addplot3[
        very thick,
        \tikzArrowColor,
        smooth,
        -stealth,
    ] coordinates {
        (1,0,0)
        ({0.8},{0.2},{funVdelDir(1,0,0.9,-0.2,0.2)})
    };
    
    \node[anchor=east] at (axis cs:0.8,0.2,0) [yshift=0.2cm, xshift=0.1cm] {$\mathbf{v}$};
    \node[anchor=south] at (axis cs:0.8,0.2,{funVdelDir(1,0,0.9,-0.2,0.2)}) [yshift=0.0cm, xshift=0cm] {$\nabla_\mathbf{v} Z$};

  \end{axis}
\end{tikzpicture}%
\hspace{0.3cm}
\begin{tikzpicture}
    \draw [draw=none, use as bounding box] (-4mm,-9.2mm) rectangle (37mm,34.8mm);
        
    \begin{axis}[tikzstyle,
        xlabel={$P(s_1,a_2)(s_1)$}, 
        ylabel={$P(s_1,a_2)(s_2)$},
        zlabel={$Z^{\pi'}_{P,s_1}$}, 
        ]

        \addplot3 [
            domain=0:1, 
            samples y=1, 
            black!90,
            smooth,
            thick,
            dashed,
        ] 
        ({x}, {1-x}, {0});

        \addplot3[
          surf,
          domain=0:1,
          domain y=0:1,
          samples=\tikzSamples,
          opacity=\tikzOpacity,
        ]
        {funVnoXi(x,y,0.9)};

        \addplot3 [
            domain=0:1, 
            samples y=1, 
            black!90,
            smooth,
            thick,
        ] 
        ({x}, {1-x}, {funVnoXi(x,(1-x),0.9)});

    \addplot3[
        only marks,
        mark=*,
        mark size=1.5pt,
        color=black,
    ] coordinates {
        ({0},{1},{funVnoXi(0,1,0.9)})
        ({1},{0},{funVnoXi(1,0,0.9)})
    };

    \addplot3[
        very thick,
        \tikzArrowColor,
        smooth,
        -stealth,
    ] coordinates {
        (1,0,0)
        (0.6,0.4,0)
    };
    \addplot3[
        very thick,
        \tikzArrowColor,
        smooth,
        -stealth,
    ] coordinates {
        (1,0,0)
        ({0.6},{0.4},{funVdelDir(1,0,0.9,-0.4,0.4)})
    };

    \node[anchor=east] at (axis cs:0.6,0.4,0) [yshift=0.2cm, xshift=0.1cm] {$\mathbf{v}$};
    \node[anchor=south] at (axis cs:0.6,0.4,{funVdelDir(1,0,0.9,-0.4,0.4)}) [yshift=0.0cm, xshift=0cm] {$\nabla_\mathbf{v} Z$};

  \end{axis}
\end{tikzpicture}}
        \caption{The directional derivative $\nabla_\mathbf{v} Z^\policy_{P,s_1}$ for $\beta = 0$ (shown in the right half) is strictly larger than for any $\beta > 0$.
    Hence, we conclude that the policy for $\beta = 0$ is \RBE.}
	\label{fig:rmdp1:value_function3}
\end{figure}

\begin{example}
    Another way to characterize the \RBE policy of $\beta = 0$ in \cref{fig:rmdp1} is to compare the derivatives of $Z^\policy_{P,s_1}$ at the minimizing transition function $\xi = 0$.
    In this example, the only feasible direction is $\mathbf{v} = [-\alpha,\alpha]$,~$\alpha > 0$, as shown in \cref{fig:rmdp1:value_function3}.
    Any change in $P(s_1,a_1)(s_1)$ and $P(s_1,a_1)(s_2)$ causes a change twice as big in $P(s_1,a_2)(s_1)$ and $P(s_1,a_2)(s_1)$, visualized by the longer vectors in \cref{fig:rmdp1:value_function3}.
    Thus, the directional derivative for the policy with $\beta=0$ (\ie always choosing action $a_2$) is strictly larger than for all $\beta > 0$.
    Therefore, the policy with $\beta = 0$ is \RBE.
\end{example}

Conversely, we can consider the derivative of the value function under the policies $\policy^\star \in \hat{\policySpace}^\star$ (as defined in \cref{corr:char:maxmax_unique}) that are, besides being \optimalrobust, also optimal under the maximizing transition function.
In this case, if $\policy^\star$ leads to a strictly \emph{lower} directional derivative $\nabla_{\mathbf{v}} Z^{\policy^\star}_{P,\bar{s}}(P_{\bar{s}})$ than all other policies $\policy' \in \policySpace^\star \setminus \{\policy^\star\}$, then $\policy^\star$ is \besteffort.
This result and the proof are analogous to \cref{corr:char:derivative}, so we omit a formal statement due to space limitations.

\paragraph{Completeness.}
So far, we have shown that an optimal robust policy $\policy^\star$ is \besteffort if either of the following holds:
\begin{enumerate}
\item $\policy^\star$ is \emph{uniquely} optimal (in the minimizing or maximizing sense with respect to the transition function).
\item $\policy^\star$ yields a \emph{uniquely} highest (resp. lowest) directional derivative at the minimizing (resp. maximizing) $P^\star \in \cP$.
\end{enumerate}
In this section, we complete the characterization by showing that any policy that satisfies these conditions up to this uniqueness is also best-effort.
\Cref{thm:char:complete} formalizes this non-trivial result.
For conciseness, we defer the preliminaries needed for the proof of \Cref{thm:char:complete} to \cref{appendix:proof:completeness}.

\begin{restatable}[Computing ORBE policies] %
{theorem}{completeness}
    \label{thm:char:complete}
    Let $\policySpace^\star = \argmax_{\policy \in \policySpace} \rho^\policy_\cP$ be the \optimalrobust policies.
    Pick two transition functions $P^{(1)}, P^{(2)} \in \cP$ such that $P^{(1)} \neq P^{(2)}$ and, for all $\bar{s} \in S$, the line $g_{\bar{s}}(\lambda) = \lambda P^{(1)}_{\bar{s}} + (1-\lambda) P^{(2)}_{\bar{s}}$ intersects the relative interior\footnote{The relative interior of a convex set $X$ is defined as $\relinterior{X} \coloneqq \{ x \in X : \forall y \in X, \, \exists \lambda > 1. \,\, \lambda x + (1-\lambda)y \in X \}$.} of $\cP_{\bar{s}}$, %
    or $\cP_{\bar{s}} \cap \{g_{\bar{s}}(\lambda)\} = \cP_{\bar{s}}$.
    Define
    \begin{align*}
        \policySpace^\star_{(1)} = \argmax_{\policy \in \policySpace^\star} \rho^\policy_{P^{(1)}},
        \qquad
        \policySpace^\star_{(2)} = \argmax_{\policy \in \policySpace^\star_{(1)}} \rho^\policy_{P^{(2)}}.
    \end{align*}
    Choose a policy $\policy^\star \in \policySpace^\star_{(2)}$ s.t., for all $\bar{s} \in \States$, it holds that
    \begin{subequations}
    \label{eq:completeness:deriv}
    \begin{align}
        \label{eq:completeness:deriv-max}
        \nabla_{\mathbf{v}} Z^{\policy^\star}_{P^{(1)},\bar{s}}(P^{(1)}_{\bar{s}}) \geq \nabla_{\mathbf{v}} Z^{\policy'}_{P^{(1)},\bar{s}}(P^{(1)}_{\bar{s}}) \enskip \forall \policy' \in \policySpace^\star_{(2)},
        \\
        \label{eq:completeness:deriv-min}
        \nabla_{\mathbf{v}} Z^{\policy^\star}_{P^{(2)},\bar{s}}(P^{(2)}_{\bar{s}}) \leq \nabla_{\mathbf{v}} Z^{\policy'}_{P^{(2)},\bar{s}}(P^{(2)}_{\bar{s}}) \enskip \forall \policy' \in \policySpace^\star_{(2)},
    \end{align}
    \end{subequations}
    where the vector $\mathbf{v} \in \RR^{|S|}$ is defined as
    \[
    \mathbf{v} = \begin{cases}
        P^{(2)} - P^{(1)} & \,\, \text{if } \rho^{\policy^\star}_{P^{(2)}} > \rho^{\policy^\star}_{P^{(1)}},
        \\
        P^{(1)} - P^{(2)} & \,\, \text{otherwise.} 
    \end{cases}
    \]
    Then, the policy $\policy^\star$ is \RBE
\end{restatable}

In the proof, presented in \cref{appendix:proof:completeness}, we show that there always exists a policy $\policy^\star \in \policySpace^\star_{(2)}$ that satisfies \cref{eq:completeness:deriv-min,eq:completeness:deriv-max}.
As discussed next, a practical implementation of \cref{thm:char:complete} is to choose $P^{(1)}$ and $P^{(2)}$ as worst- and best-case transition functions.

\begin{algorithm}[t!]
\caption{Computation of \RBE policy.}

\label{alg:compute-rbe}
\begin{algorithmic}[1]
\Statex\textbf{Input:} $s$-rectangular RMDP $(S, A, \cP, r, \gamma)$
\Statex\textbf{Output:} \RBE policy $\pi^\star \in \policySpace^\star_\BE$

\State\label{line:maxmin}$\Pi \gets \argmax_{\pi} \min_{P \in \cP} \rho_P^\pi$ %

\If{$|\Pi| > 1$} \label{line:check-maxmin}
    \State\label{line:maxmax}$\Pi \gets \argmax_{\pi \in \Pi} \max_{P \in \cP} \rho_P^\pi$ %
    \If{$|\Pi| > 1$} \label{line:check-maxmax}
        \State\label{line:policy}$\policy \gets \policySpace$
        \State\label{line:P1}${P}^{(1)} \gets \argmin_{P \in \cP} \rho^{\policy}_P$
        \State\label{line:P2}${P}^{(2)} \gets \argmax_{P \in \cP} \rho^{\policy}_P$
        \State\label{line:refinement1}$\policySpace \gets \argmax_{\policy \in \policySpace} \rho^\policy_{P^{(1)}}$
        \State\label{line:refinement2}$\policySpace \gets \argmax_{\policy \in \policySpace} \rho^\policy_{P^{(2)}}$
        \State\label{line:vector}$\mathbf{v} \gets P^{(2)} - P^{(1)} \enskip \forall\bar{s}$
        \State\label{line:deriv-max}$\Pi \gets \bigtimes_{\bar{s} \in S} \argmax_{\policy(\bar{s}) \in \policySpace} \nabla_{\mathbf{v}} Z^\pi_{P^{(1)},\bar{s}}(P^{(1)}_{\bar{s}})$ %
        
        \If{$|\Pi| > 1$} \label{line:check-deriv-max}
            \State \label{line:deriv-min} $\Pi \gets \bigtimes_{\bar{s} \in S} \argmin_{\policy(\bar{s}) \in \policySpace} \nabla_{\mathbf{v}} Z^\pi_{P^{(2)},\bar{s}}(P^{(2)}_{\bar{s}})$ %
        \EndIf
    \EndIf
\EndIf

\State \Return any $\pi^\star \in \Pi$ %
\end{algorithmic}
\end{algorithm}

\subsection{Algorithm}\label{subsec:alg}
\cref{thm:char:complete} leads to \Cref{alg:compute-rbe} for computing an \RBE policy.
In particular we iteratively refine $\policySpace$ by applying the criteria presented above to obtain an \RBE policy.

We first use robust value iteration to compute the set of \optimalrobust policies (Line~\ref{line:maxmin}), which, if a singleton\footnote{An optimal policy $\policy^\star$ is unique if, for every state $s \in S$, the robust value $V^{\policy^\star}_{\cP}(s)$ is strictly higher than $R(s,a) + \dotp{\gamma P(s,a)}{V^{\policy^\star}_{\cP}}$ for all other actions $a \neq \policy^\star(s)$~\cite{DBLP:books/wi/Puterman94}.
For randomized policies, we instead must check for strict concavity of the value function with respect to the policy, \eg, by deriving the \optimalrobust Bellman operator explicitly as in~\citet{DBLP:conf/icml/KumarWLM24}.
}, consists of an \RBE policy by \cref{corr:char:maxmin_unique}, thus solving \cref{prob:statement}.
Otherwise, we analogously compute the set of optimal policies under the maximizing transition function (Line~\ref{line:maxmax}), which, if a singleton, contains an \RBE policy by \cref{corr:char:maxmax_unique}.

If this set is still not a singleton, we arbitrarily select a policy $\policy$ from the remaining updated set $\policySpace $ (Line~\ref{line:policy}) and compute the minimizing and maximizing transition functions ${P}^{(1)}$ and ${P}^{(2)}$ (Lines~\ref{line:P1} and \ref{line:P2}).
We then refine the policy set by keeping only those that first maximize the expected return for ${P}^{(1)}$ and then for ${P}^{(2)}$ (Lines~\ref{line:refinement1} and \ref{line:refinement2}).
For every $\bar{s} \in S$, we define $\mathbf{v} \gets {P}^{(2)} - {P}^{(1)}$ as per \cref{thm:char:complete} (Line~\ref{line:vector}).
Next, we refine the set of policies by, in every state $\bar{s} \in S$, only selecting actions that \emph{maximize} the directional derivative at the \emph{minimizer} ${P}^{(1)}$ (Line~\ref{line:deriv-max}).
The Cartesian product $\policySpace \gets \bigtimes_{\bar{s} \in S} \cdots$ of these actions gives the set of policies that satisfy \cref{eq:completeness:deriv-max}.
If $\policySpace$ is now a singleton, then it satisfies \cref{corr:char:derivative} and, thus, $\policy^\star \in \policySpace$ is \RBE.
 Otherwise, if multiple policies remain, we perform the analogous refinement---over the set of policies obtained in Line~\ref{line:deriv-max}---to \emph{minimize} the directional derivative at the \emph{maximizing} transition function ${P}^{(2)}$ (Line~\ref{line:deriv-min}).
 
Any returned policy $\policy^\star$ satisfies at least one of the \cref{corr:char:maxmin_unique,corr:char:maxmax_unique,corr:char:derivative} or \cref{thm:char:complete}, thus showing that the algorithm always returns a \RBE policy.

\begin{remark}
We can easily amend \cref{alg:compute-rbe} for a policy that \emph{minimizes} expected return under the \emph{maximizing} probabilities.
In this case, we replace all $\min$ with $\max$ and vice~versa.
We shall see such an application in \cref{sec:implementation}.
\end{remark}

\paragraph{Complexity.}
The computations in \Cref{alg:compute-rbe} lead to a manageable overhead compared to the standard robust value iteration in Line~\ref{line:check-maxmin}.
First, Line~\ref{line:maxmax} amounts to running robust value iteration again, but over a potentially smaller subset of actions per state, increasing complexity by a constant smaller than 2.
Next, Lines~\ref{line:P1} to \ref{line:refinement2} compute the minimizer and maximizer, and solve the two associated MDPs using standard value iteration.
Finally, maximizing the derivatives (Line~\ref{line:deriv-max}) amounts to solving a linear equation system of size $|S|$ for every state and action~\citep{DBLP:conf/vmcai/HeckSJMK22,DBLP:conf/cav/BadingsJMTJ23}.
Solving each equation system has worst-case complexity $O(|S|^3)$, yielding an overall complexity of $O(|S|^4 \cdot |A|)$ for Line~\ref{line:deriv-max} (and, by symmetry, also for Line~\ref{line:deriv-min}).
Thus, whenever computing an optimal robust policy is feasible, the additional overhead of \cref{alg:compute-rbe} is also manageable.

\section{Empirical Evaluation}\label{sec:implementation}
\label{sec:experiment:prism}
In \cref{sec:characterization}, we presented an efficient and complete algorithm for computing \RBE policies.
In this section, we experimentally show the applicability of our algorithm within different implementations of robust value iteration.
Our primary objective is to provide a proof of concept to confirm the theoretical results from \cref{sec:characterization}.
The experiments ran on an Apple MacBook with an M4 Pro chip and 24GB of RAM.
The code is available on {\color{Sepia} \url{https://github.com/tbadings/best-effort-rmdps}}.

\subsection{\BestEffort Policies for Interval MDPs}
We consider robust value iteration within PRISM, a popular tool for MDPs~\cite{DBLP:conf/cav/KwiatkowskaNP11}. %
PRISM only supports \emph{interval MDPs} (IMDPs), \ie $(s,a)$-rectangular RMDPs with interval-valued probabilities.
We consider variants of a \emph{slippery gridworld} IMDP (see \cref{appendix:models} for details).
The objective is to minimize the expected number of steps to reach the goal state.
When the agent slips, it remains in the same state.
The agent can move in each direction with two actions: one where the slipping probability $p$ is \emph{fixed}, and one where it belongs to the \emph{interval} $[q,p]$.
Since the goal is to minimize the number of steps, the worst-case slipping probability is $p$, so the robust value of both action types is the same.
However, only a policy that always picks the interval-valued action is \besteffort.

To show that PRISM returns an arbitrary \optimalrobust (but not necessarily \RBE) policy, we define the IMDP's actions in different orders.
Let $\nu \in [0,1]$ be the fraction of states in which the \besteffort action is defined first.
We consider $\nu=0$ (non-\besteffort always defined first), $\nu=1$ (\besteffort defined first), and $\nu=0.5$ (a coin-flip decides which action is defined first).
We repeat each experiment over $10$ seeds.
The results in \cref{tab:gridworld_prism} show the percentage of states where the optimal robust policy returned by PRISM chooses the \besteffort action (\ie the action with an interval for the slipping probability).
Essentially, the PRISM policy sticks to the first action it finds to be \optimalrobust, so the fraction of \besteffort actions is roughly proportional to $\nu$.
Thus, PRISM finds \optimalrobust policies, but not necessarily \RBE ones.

Conversely, for our method, we apply \cref{corr:char:maxmax_unique} by again running robust value iteration with PRISM, but this time over the \optimalrobust policies and for the \emph{best-case} slipping probability.
This second run of value iteration is over a smaller set of policies and less than doubles the runtime (especially for $|S| = 10^4$), thus confirming our results from \cref{sec:characterization}:
the complexity for computing \RBE policies is still dominated by that of robust value iteration, making the process feasible whenever robust optimal policies can be computed.
The policy obtained using our approach always chooses actions with the interval-valued slipping probabilities.
Thus, and as confirmed by the rightmost column of \cref{tab:gridworld_prism}, the use of \cref{corr:char:maxmax_unique} indeed always leads to \RBE policies.

\begin{table}[t!]
\caption{Comparison to PRISM on the gridworld IMDPs, showing the grid sizes, probability $\nu$ to define the \besteffort action first, runtimes, and percentage of states in which the resulting optimal policy chooses a \besteffort (\BE) action.}
    \centering
    \small
\begin{tabular}{@{}llrrrr@{}}
\toprule
& & \multicolumn{2}{c}{{PRISM}} & \multicolumn{2}{c}{{+ Best-case (Corr.~\ref{corr:char:maxmax_unique})}}
\\ \cmidrule(lr){3-4} \cmidrule(lr){5-6}
$|S|$ & $\nu$ & Time [s] & \BE~[\%] & Time [s] & \BE~[\%] \\
\midrule
\multirow[t]{3}{*}{100}%
& $0.0$ & $2.0$ & $21.9$ & $3.9$ & $100.0$ \\
 & $0.5$ & $1.9$ & $59.6$ & $3.8$ & $100.0$ \\
 & $1.0$ & $1.9$ & $89.9$ & $3.9$ & $100.0$ \\
\midrule
\multirow[t]{3}{*}{$900$}%
& $0.0$ & $2.1$ & $23.3$ & $4.0$ & $100.0$ \\
 & $0.5$ & $2.1$ & $62.0$ & $4.1$ & $100.0$ \\
 & $1.0$ & $2.1$ & $87.4$ & $4.2$ & $100.0$ \\
\midrule
\multirow[t]{3}{*}{$10\,000$}%
& $0.0$ & $48.9$ & $21.2$ & $54.4$ & $100.0$ \\
 & $0.5$ & $54.9$ & $39.3$ & $61.2$ & $100.0$ \\
 & $1.0$ & $51.0$ & $85.4$ & $56.5$ & $100.0$ \\
\bottomrule
\end{tabular}

\label{tab:gridworld_prism}
\end{table}

\subsection{\BestEffort Policies for $s$-Rectangular RMDPs}
\label{sec:experiment:rvi}
To show the applicability of our methods beyond IMDPs, we create a basic implementation of robust value iteration and the derivative computation for $s$-rectangular RMDPs (see \cref{appendix:models} for details).
We consider variants of the same slippery gridworld as in \cref{sec:experiment:prism} but now with an $s$-rectangular uncertainty set.
For this RMDP, either \cref{corr:char:maxmax_unique} or \ref{corr:char:derivative} is sufficient to obtain an \RBE policy.
Therefore, instead of implementing \cref{alg:compute-rbe} sequentially, we test both separately on top of robust value iteration.

The results in \cref{tab:gridworld_rvi} give the same picture as in \cref{sec:experiment:prism}: if multiple \optimalrobust policies exist, robust value iteration returns the first optimal actions it finds.
By contrast, our methods provide simple yet effective tie-break rules, either by returning a policy that is also optimal under the best-case transition probabilities (RVI + \cref{corr:char:maxmax_unique}), or by returning a policy with the highest derivatives (RVI + \cref{corr:char:derivative}).
The former less than doubles the total runtime (especially for the larger models), while computing derivatives is even cheaper, increasing the total runtime by less than 10\%.

\begin{table}[t!]
\caption{Results on the gridworld RMDPs, for robust value iteration (RVI), RVI plus optimizing for the best-case probabilities, and RVI plus optimizing for the derivatives.}
    \centering
    \setlength{\tabcolsep}{2pt}
\small
\begin{tabular}{@{}llrrrrrr@{}}
\toprule
& & \multicolumn{2}{c}{{RVI}} & \multicolumn{2}{c}{{+ Best-case (Corr.~\ref{corr:char:maxmax_unique})}} & \multicolumn{2}{c}{{+ Deriv. (Corr.~\ref{corr:char:derivative})}}
\\ \cmidrule(lr){3-4} \cmidrule(lr){5-6} \cmidrule(lr){7-8}
$|S|$ & $\nu$ & Time [s] & \BE~[\%] & Time [s] & \BE~[\%] & Time [s] & \BE~[\%] \\
\midrule
\multirow[t]{3}{*}{$100$}%
& $0.0$\!\! & $7.0$ & $0.0$ & $11.7$ & $100.0$ & $7.1$ & $100.0$ \\
 & $0.5$\!\! & $7.0$ & $49.2$ & $11.7$ & $100.0$ & $7.1$ & $100.0$ \\
 & $1.0$\!\! & $7.6$ & $100.0$ & $12.7$ & $100.0$ & $7.6$ & $100.0$ \\
\midrule
\multirow[t]{3}{*}{$400$}%
& $0.0$\!\! & $49.5$ & $0.0$ & $83.4$ & $100.0$ & $50.1$ & $100.0$ \\
 & $0.5$\!\! & $50.1$ & $48.0$ & $84.4$ & $100.0$ & $50.6$ & $100.0$ \\
 & $1.0$\!\! & $48.4$ & $100.0$ & $81.7$ & $100.0$ & $48.9$ & $100.0$ \\
\midrule
\multirow[t]{3}{*}{$900$}%
& $0.0$\!\! & $163.6$ & $0.0$ & $274.4$ & $100.0$ & $172.0$ & $100.0$ \\
 & $0.5$\!\! & $163.4$ & $50.1$ & $273.8$ & $100.0$ & $171.9$ & $100.0$ \\
 & $1.0$\!\! & $164.1$ & $100.0$ & $275.0$ & $100.0$ & $172.6$ & $100.0$ \\
\bottomrule
\end{tabular}

\label{tab:gridworld_rvi}
\end{table}

\section{Related Work}
\label{sec:related}

The notion of \besteffort was first introduced in a game theoretic context by~\citet{faella2009} as a relaxation of ``winning'' policies (or strategies).
These ideas have been adapted to \emph{reactive synthesis}, where in the absence of a winning strategy, \besteffort policies can be computed at the same cost~\citep{Aminof_DeGiacomo_Murano_Rubin_2019,ijcai2020p232,de2025a}.
Closest to our work are~\citet{10175747} and~\citet{10.1007/978-3-031-56940-1_17}, who study \besteffort for stochastic games where each transition probability is only constrained to lie within the open interval $(0,1)$.
Crucially,~\citet{10175747,10.1007/978-3-031-56940-1_17} exploit this lack of probability bounds to construct a three-valued abstraction of policies (\emph{winning}, \emph{losing}, and \emph{pending}) which is central to their characterization of \besteffort policies.
However, this does not carry over to RMDPs, where probabilities are bounded subsets of $[0,1]$, thus breaking a direct translation of their characterization to the RMDP setting.

Related are \emph{lexicographic orderings} over objectives for MDPs~\citep{DBLP:conf/aaai/WrayZM15} and algorithms for stochastic games that progressively prune suboptimal actions per objective~\citep{Chatterjee2024}.
While our algorithm is conceptually similar, the refinement to \besteffort policies requires different reasoning over the dominance order over policies.
In \emph{multi-objective MDPs} (MOMDPs), multiple objectives are combined, leading to Pareto optimality~\citep{10.1007/978-3-030-45190-5_19,10.1007/978-3-540-71209-1_6}.
While MOMDPs require a trade-off between the objectives, our setting uses \besteffort as a hard refinement within the \optimalrobust policies.
Finally, weakly related are partial orders over states of MDPs~\citep{10.1007/978-3-319-89366-2_20} and monotonicity in parametric Markov chains~\citep{DBLP:conf/atva/SpelJK19}.

While we focus on $s$-rectangular RMDPs, our definitions of \besteffort and dominance carry over to other models, such as $k$- or non-rectangular RMDPs~\cite{DBLP:conf/icml/MannorSST04,DBLP:journals/mor/GoyalG23,DBLP:conf/aaai/GadotDKELM24} and parametric MDPs~\cite{DBLP:conf/atva/QuatmannD0JK16}.
However, computing optimal policies for these models is much harder---up to NP-hard for general non-rectangular RMDPs~\cite{DBLP:journals/mor/WiesemannKR13}.
Thus, adapting dynamic programming methods to these models is still an open problem.

\section{Conclusion}\label{sec:conclusions}

We presented a principled tie-breaker among \optimalrobust policies in RMDPs based on \besteffort.
Our proposed \RBE policies maximize the worst-case expected return but also achieve a maximal expected return under non-adversarial transition probabilities.
We fully characterized \RBE policies and presented an algorithm for computing them.
Our experiments showed how to use our methods as an effective and efficient tie-breaker within robust value iteration.

Future work includes generalizing our methods to non-rectangular RMDPs or parametric MDPs.
Moreover, our methods still rely on first computing a policy under adversarial transition probabilities.
A next step is to consider $\varepsilon$-close \optimalrobust policies and optimize for \besteffort within this broader context.
Finally, we aim to study settings with a Bayesian prior over the uncertainty set~\citep{murphy2001introduction}.

\section{Acknowledgments}
This research is supported by the EPSRC grant EP/Y028872/1, \emph{Mathematical Foundations of Intelligence: An ``Erlangen Programme'' for AI}.
\bibliography{bibfile_dblp}

\begin{thebibliography}{35}
\providecommand{\natexlab}[1]{#1}

\bibitem[{Aminof et~al.(2020)Aminof, Giacomo, Lomuscio, Murano, and
  Rubin}]{ijcai2020p232}
Aminof, B.; Giacomo, G.~D.; Lomuscio, A.; Murano, A.; and Rubin, S. 2020.
\newblock Synthesizing strategies under expected and exceptional environment
  behaviors.
\newblock In \emph{{IJCAI}}, 1674--1680. ijcai.org.

\bibitem[{Aminof et~al.(2019)Aminof, Giacomo, Murano, and
  Rubin}]{Aminof_DeGiacomo_Murano_Rubin_2019}
Aminof, B.; Giacomo, G.~D.; Murano, A.; and Rubin, S. 2019.
\newblock Planning under {LTL} Environment Specifications.
\newblock In \emph{{ICAPS}}, 31--39. {AAAI} Press.

\bibitem[{Aminof et~al.(2023)Aminof, Giacomo, Rubin, and Zuleger}]{10175747}
Aminof, B.; Giacomo, G.~D.; Rubin, S.; and Zuleger, F. 2023.
\newblock Stochastic Best-Effort Strategies for Borel Goals.
\newblock In \emph{{LICS}}, 1--13. {IEEE}.

\bibitem[{{\AA}str{\"o}m(2012)}]{aastrom2012introduction}
{\AA}str{\"o}m, K.~J. 2012.
\newblock \emph{Introduction to stochastic control theory}.
\newblock Courier Corporation.

\bibitem[{Badings et~al.(2023{\natexlab{a}})Badings, Junges, Marandi, Topcu,
  and Jansen}]{DBLP:conf/cav/BadingsJMTJ23}
Badings, T.~S.; Junges, S.; Marandi, A.; Topcu, U.; and Jansen, N.
  2023{\natexlab{a}}.
\newblock Efficient Sensitivity Analysis for Parametric Robust Markov Chains.
\newblock In \emph{{CAV} {(3)}}, volume 13966 of \emph{Lecture Notes in
  Computer Science}, 62--85. Springer.

\bibitem[{Badings et~al.(2023{\natexlab{b}})Badings, Sim{\~{a}}o, Suilen, and
  Jansen}]{DBLP:journals/sttt/BadingsSSJ23}
Badings, T.~S.; Sim{\~{a}}o, T.~D.; Suilen, M.; and Jansen, N.
  2023{\natexlab{b}}.
\newblock Decision-making under uncertainty: beyond probabilities.
\newblock \emph{Int. J. Softw. Tools Technol. Transf.}, 25(3): 375--391.

\bibitem[{Chatterjee et~al.(2024)Chatterjee, Katoen, Mohr, Weininger, and
  Winkler}]{Chatterjee2024}
Chatterjee, K.; Katoen, J.; Mohr, S.; Weininger, M.; and Winkler, T. 2024.
\newblock Stochastic games with lexicographic objectives.
\newblock \emph{Formal Methods Syst. Des.}, 63(1): 40--80.

\bibitem[{Davis(2018)}]{davis2018markov}
Davis, M.~H. 2018.
\newblock \emph{Markov models \& optimization}.
\newblock Routledge.

\bibitem[{De~Giacomo, Parretti, and Zhu(2025)}]{de2025a}
De~Giacomo, G.; Parretti, G.; and Zhu, S. 2025.
\newblock Symbolic LTL\(_f\) Synthesis: A Unified Approach for Synthesizing
  Winning, Dominant, and Best-Effort Strategies.
\newblock \emph{SN Computer Science}, 6(2): 147.

\bibitem[{Delgrange et~al.(2020)Delgrange, Katoen, Quatmann, and
  Randour}]{10.1007/978-3-030-45190-5_19}
Delgrange, F.; Katoen, J.; Quatmann, T.; and Randour, M. 2020.
\newblock Simple Strategies in Multi-Objective MDPs.
\newblock In \emph{{TACAS} {(1)}}, volume 12078 of \emph{Lecture Notes in
  Computer Science}, 346--364. Springer.

\bibitem[{Etessami et~al.(2008)Etessami, Kwiatkowska, Vardi, and
  Yannakakis}]{10.1007/978-3-540-71209-1_6}
Etessami, K.; Kwiatkowska, M.~Z.; Vardi, M.~Y.; and Yannakakis, M. 2008.
\newblock Multi-Objective Model Checking of Markov Decision Processes.
\newblock \emph{Log. Methods Comput. Sci.}, 4(4).

\bibitem[{Faella(2009)}]{faella2009}
Faella, M. 2009.
\newblock Admissible Strategies in Infinite Games over Graphs.
\newblock In \emph{{MFCS}}, volume 5734 of \emph{Lecture Notes in Computer
  Science}, 307--318. Springer.

\bibitem[{Gadot et~al.(2024)Gadot, Derman, Kumar, Elfatihi, Levy, and
  Mannor}]{DBLP:conf/aaai/GadotDKELM24}
Gadot, U.; Derman, E.; Kumar, N.; Elfatihi, M.~M.; Levy, K.; and Mannor, S.
  2024.
\newblock Solving Non-rectangular Reward-Robust MDPs via Frequency
  Regularization.
\newblock In \emph{{AAAI}}, 21090--21098. {AAAI} Press.

\bibitem[{Giacomo, Favorito, and Silo(2024)}]{10.1007/978-3-031-56940-1_17}
Giacomo, G.~D.; Favorito, M.; and Silo, L. 2024.
\newblock Composition of Stochastic Services for LTL\(_f\) Goal Specifications.
\newblock In \emph{FoIKS}, volume 14589 of \emph{Lecture Notes in Computer
  Science}, 298--316. Springer.

\bibitem[{Golub and Loan(2013)}]{Golub2013MatrixComputations}
Golub, G.~H.; and Loan, C. F.~V. 2013.
\newblock \emph{Matrix Computations, Fourth Edition}.
\newblock Johns Hopkins University Press.

\bibitem[{Goyal and Grand{-}Cl{\'{e}}ment(2023)}]{DBLP:journals/mor/GoyalG23}
Goyal, V.; and Grand{-}Cl{\'{e}}ment, J. 2023.
\newblock Robust Markov Decision Processes: Beyond Rectangularity.
\newblock \emph{Math. Oper. Res.}, 48(1): 203--226.

\bibitem[{Halmos(1974)}]{Halmos1974}
Halmos, P.~R. 1974.
\newblock \emph{Zorn's Lemma}, 62--65.
\newblock New York, NY: Springer New York.
\newblock ISBN 978-1-4757-1645-0.

\bibitem[{Hanheide et~al.(2017)Hanheide, G{\"{o}}belbecker, Horn, Pronobis,
  Sj{\"{o}}{\"{o}}, Aydemir, Jensfelt, Gretton, Dearden, Jan{\'{\i}}cek,
  Zender, Kruijff, Hawes, and Wyatt}]{DBLP:journals/ai/HanheideGHPSAJG17}
Hanheide, M.; G{\"{o}}belbecker, M.; Horn, G.~S.; Pronobis, A.;
  Sj{\"{o}}{\"{o}}, K.; Aydemir, A.; Jensfelt, P.; Gretton, C.; Dearden, R.;
  Jan{\'{\i}}cek, M.; Zender, H.; Kruijff, G.~M.; Hawes, N.; and Wyatt, J.~L.
  2017.
\newblock Robot task planning and explanation in open and uncertain worlds.
\newblock \emph{Artif. Intell.}, 247: 119--150.

\bibitem[{Heck et~al.(2022)Heck, Spel, Junges, Moerman, and
  Katoen}]{DBLP:conf/vmcai/HeckSJMK22}
Heck, L.; Spel, J.; Junges, S.; Moerman, J.; and Katoen, J. 2022.
\newblock Gradient-Descent for Randomized Controllers Under Partial
  Observability.
\newblock In \emph{{VMCAI}}, volume 13182 of \emph{Lecture Notes in Computer
  Science}, 127--150. Springer.

\bibitem[{Iyengar(2005)}]{DBLP:journals/mor/Iyengar05}
Iyengar, G.~N. 2005.
\newblock Robust Dynamic Programming.
\newblock \emph{Math. Oper. Res.}, 30(2): 257--280.

\bibitem[{Junges et~al.(2024)Junges, {\'{A}}brah{\'{a}}m, Hensel, Jansen,
  Katoen, Quatmann, and Volk}]{DBLP:journals/fmsd/JungesAHJKQV24}
Junges, S.; {\'{A}}brah{\'{a}}m, E.; Hensel, C.; Jansen, N.; Katoen, J.;
  Quatmann, T.; and Volk, M. 2024.
\newblock Parameter synthesis for Markov models: covering the parameter space.
\newblock \emph{Formal Methods Syst. Des.}, 62(1): 181--259.

\bibitem[{Kumar et~al.(2024)Kumar, Wang, Levy, and
  Mannor}]{DBLP:conf/icml/KumarWLM24}
Kumar, N.; Wang, K.; Levy, K.~Y.; and Mannor, S. 2024.
\newblock Efficient Value Iteration for s-rectangular Robust Markov Decision
  Processes.
\newblock In \emph{{ICML}}. OpenReview.net.

\bibitem[{Kwiatkowska, Norman, and
  Parker(2011)}]{DBLP:conf/cav/KwiatkowskaNP11}
Kwiatkowska, M.~Z.; Norman, G.; and Parker, D. 2011.
\newblock {PRISM} 4.0: Verification of Probabilistic Real-Time Systems.
\newblock In \emph{{CAV}}, volume 6806 of \emph{Lecture Notes in Computer
  Science}, 585--591. Springer.

\bibitem[{Mannor et~al.(2004)Mannor, Simester, Sun, and
  Tsitsiklis}]{DBLP:conf/icml/MannorSST04}
Mannor, S.; Simester, D.; Sun, P.; and Tsitsiklis, J.~N. 2004.
\newblock Bias and variance in value function estimation.
\newblock In \emph{{ICML}}, volume~69 of \emph{{ACM} International Conference
  Proceeding Series}. {ACM}.

\bibitem[{Moerland et~al.(2023)Moerland, Broekens, Plaat, and
  Jonker}]{DBLP:journals/ftml/MoerlandBPJ23}
Moerland, T.~M.; Broekens, J.; Plaat, A.; and Jonker, C.~M. 2023.
\newblock Model-based Reinforcement Learning: {A} Survey.
\newblock \emph{Found. Trends Mach. Learn.}, 16(1): 1--118.

\bibitem[{Murphy(2001)}]{murphy2001introduction}
Murphy, K. 2001.
\newblock An introduction to graphical models.
\newblock \emph{Rap. tech}, 96: 1--19.

\bibitem[{Nilim and Ghaoui(2005)}]{DBLP:journals/ior/NilimG05}
Nilim, A.; and Ghaoui, L.~E. 2005.
\newblock Robust Control of Markov Decision Processes with Uncertain Transition
  Matrices.
\newblock \emph{Oper. Res.}, 53(5): 780--798.

\bibitem[{Puterman(1994)}]{DBLP:books/wi/Puterman94}
Puterman, M.~L. 1994.
\newblock \emph{Markov Decision Processes: Discrete Stochastic Dynamic
  Programming}.
\newblock Wiley Series in Probability and Statistics. Wiley.

\bibitem[{Quatmann et~al.(2016)Quatmann, Dehnert, Jansen, Junges, and
  Katoen}]{DBLP:conf/atva/QuatmannD0JK16}
Quatmann, T.; Dehnert, C.; Jansen, N.; Junges, S.; and Katoen, J. 2016.
\newblock Parameter Synthesis for Markov Models: Faster Than Ever.
\newblock In \emph{{ATVA}}, volume 9938 of \emph{Lecture Notes in Computer
  Science}, 50--67.

\bibitem[{Roux and P{\'{e}}rez(2018)}]{10.1007/978-3-319-89366-2_20}
Roux, S.~L.; and P{\'{e}}rez, G.~A. 2018.
\newblock The Complexity of Graph-Based Reductions for Reachability in Markov
  Decision Processes.
\newblock In \emph{FoSSaCS}, volume 10803 of \emph{Lecture Notes in Computer
  Science}, 367--383. Springer.

\bibitem[{Russell and Norvig(2010)}]{DBLP:books/daglib/0023820}
Russell, S.~J.; and Norvig, P. 2010.
\newblock \emph{Artificial Intelligence - {A} Modern Approach, Third
  International Edition}.
\newblock Pearson Education.

\bibitem[{Spel, Junges, and Katoen(2019)}]{DBLP:conf/atva/SpelJK19}
Spel, J.; Junges, S.; and Katoen, J. 2019.
\newblock Are Parametric Markov Chains Monotonic?
\newblock In \emph{{ATVA}}, volume 11781 of \emph{Lecture Notes in Computer
  Science}, 479--496. Springer.

\bibitem[{Suilen et~al.(2024)Suilen, Badings, Bovy, Parker, and
  Jansen}]{Suilen2024RMDPs}
Suilen, M.; Badings, T.~S.; Bovy, E.~M.; Parker, D.; and Jansen, N. 2024.
\newblock Robust Markov Decision Processes: {A} Place Where {AI} and Formal
  Methods Meet.
\newblock In \emph{Principles of Verification {(3)}}, volume 15262 of
  \emph{Lecture Notes in Computer Science}, 126--154. Springer.

\bibitem[{Wiesemann, Kuhn, and Rustem(2013)}]{DBLP:journals/mor/WiesemannKR13}
Wiesemann, W.; Kuhn, D.; and Rustem, B. 2013.
\newblock Robust Markov Decision Processes.
\newblock \emph{Math. Oper. Res.}, 38(1): 153--183.

\bibitem[{Wray, Zilberstein, and Mouaddib(2015)}]{DBLP:conf/aaai/WrayZM15}
Wray, K.~H.; Zilberstein, S.; and Mouaddib, A. 2015.
\newblock Multi-Objective MDPs with Conditional Lexicographic Reward
  Preferences.
\newblock In \emph{{AAAI}}, 3418--3424. {AAAI} Press.

\end{thebibliography}

\clearpage
\appendix

\section{Proofs}
\label{appendix:proofs}

This appendix provides the complete proofs of the theoretical results presented in the main paper.
For clarity and self-containment, all theorems and lemmas are restated before their corresponding proofs.

\subsection{Existence of \BestEffort Policies}
\label{app:ex:proofs}

This section contains the proof of \Cref{th:rmdp:existence:robustbesteffort}.
We first prove that the set of \besteffort policies in an \srectangular RMDP is non-empty (\Cref{th:rmdp:existence:besteffort}) and then show that its intersection with the set of \optimalrobust is non-empty as well.

\begin{restatable}[Existence of \besteffort policies]{theorem}{beexistence}
\label{th:rmdp:existence:besteffort}
The set of \emph{\besteffort policies} $\policySpace_\BE$ is nonempty.
\end{restatable}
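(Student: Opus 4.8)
The plan is to exhibit a single best-effort policy as the maximizer of a suitably chosen scalar aggregate of the returns $\{\rho^\pi_P\}_{P \in \cP}$, thereby turning the search for a maximal element of the dominance preorder into an ordinary continuous optimization over a compact set. First I would record two continuity/compactness facts. Since we restrict to stationary randomized policies, the policy space $\policySpace = \prod_{s \in S} \dist{A}$ is a finite product of simplices and hence compact. Moreover, for fixed $P$ the map $\pi \mapsto \rho^\pi_P = \langle \initState, (I - \gamma P^\pi)^{-1} R^\pi \rangle$ is continuous (indeed rational) in $\pi$, because $P^\pi$ and $R^\pi$ are linear in $\pi$ by \cref{eq:P_policy,eq:R_policy} and $I - \gamma P^\pi$ is invertible for $\gamma < 1$; symmetrically, for fixed $\pi$ the map $P \mapsto \rho^\pi_P$ is continuous on the (compact) uncertainty set $\cP$, which is also what \cref{lemma:rational} makes explicit. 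All returns are uniformly bounded by $\max_{s,a} R(s,a)/(1-\gamma)$.

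Next I would fix a Borel probability measure $\mu$ on $\cP$ of full support, i.e.\ one assigning positive mass to every nonempty relatively open subset of $\cP$ (such a $\mu$ exists since $\cP$ is a compact, hence separable, metric space). Define $\Phi(\pi) \coloneqq \int_\cP \rho^\pi_P \, d\mu(P)$. By dominated convergence together with the pointwise continuity and uniform boundedness above, $\Phi$ is continuous on $\policySpace$, so by compactness it attains a maximum at some $\pi^\star \in \policySpace$.

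The core step is to show that any such maximizer $\pi^\star$ is best-effort. Suppose not; then by \cref{def:rmdp:besteffort,def:rmdp:strict_dominance} there is $\pi'$ with $\pi' >_\cP \pi^\star$, so $\rho^{\pi'}_P \ge \rho^{\pi^\star}_P$ for all $P \in \cP$ and $\rho^{\pi'}_{P'} > \rho^{\pi^\star}_{P'}$ for some $P' \in \cP$. Continuity of $P \mapsto \rho^{\pi'}_P - \rho^{\pi^\star}_P$ yields a nonempty relatively open $U \ni P'$ on which this difference is strictly positive; since $\mu$ has full support, $\mu(U) > 0$. As the integrand is everywhere nonnegative and strictly positive on $U$, we obtain $\Phi(\pi') - \Phi(\pi^\star) \ge \int_U (\rho^{\pi'}_P - \rho^{\pi^\star}_P)\, d\mu(P) > 0$, contradicting the maximality of $\pi^\star$. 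Hence $\pi^\star \in \policySpace_\BE$, and the set is nonempty.

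The main obstacle I anticipate is precisely the ``strict-at-a-point to strict-in-integral'' passage: scalarizing by the worst case alone, or by any single fixed $P \in \cP$, does not suffice, because a dominating policy may improve strictly only at transition functions away from the one being scored, leaving the scalar value unchanged. The full-support measure $\mu$ is exactly what guarantees that every potential point of strict improvement carries positive weight, while the continuity of $P \mapsto \rho^\pi_P$ is what upgrades a single strict inequality to a strict inequality on a set of positive measure. A secondary point to treat carefully is the standing assumption that $\cP$ is compact, which is implicit in the paper's use of $\min_{P\in\cP}$ and $\max_{P\in\cP}$ and is needed both for the existence of $\mu$ and for attaining the maximum of $\Phi$.
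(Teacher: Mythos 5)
Your proof is correct, and it takes a genuinely different route from the paper's. The paper proves this statement with Zorn's Lemma: dominance makes $\policySpace$ an ordered set, every chain of policies is argued to admit an upper bound (the pointwise supremum of the chain's return functions, claimed to be realized by some policy via compactness of $\policySpace$ and continuity of $\policy \mapsto \rho^\policy_P$), so maximal---\ie \besteffort---elements exist. You instead scalarize: integrating $\rho^\policy_P$ against a full-support probability measure $\mu$ on $\cP$ gives a continuous functional $\Phi$ on the compact set $\policySpace$, and any maximizer of $\Phi$ is \besteffort, since a strictly dominating policy would improve the (everywhere nonnegative) integrand on a relatively open set of positive $\mu$-measure, forcing a strictly larger integral. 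Your route is more elementary and, in one respect, tighter: it avoids transfinite machinery and sidesteps the delicate step in the paper's proof, namely that the pointwise supremum over a possibly uncountable chain is actually attained by a single policy (the paper asserts this from continuity and compactness without, \eg passing to a countable cofinal subchain and extracting a convergent subsequence); it also does not need dominance to be antisymmetric, which in fact it is not---distinct policies can induce identical return functions, so $\geq_\cP$ is only a preorder. Your argument moreover demands less of $\cP$: separability (automatic for subsets of Euclidean space) and boundedness of returns suffice, whereas the paper explicitly invokes compactness of $\cP$. On that note, one small correction to your closing remark: compactness of $\cP$ is not actually needed for either the existence of $\mu$ or for attaining the maximum of $\Phi$, since the maximum is taken over the compact set $\policySpace$. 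What the paper's order-theoretic framing buys is reuse: the proof of \cref{th:rmdp:existence:robustbesteffort} invokes the existence of maximal elements within the set of \optimalrobust policies; your scalarization delivers the same strengthening by maximizing $\Phi$ over that closed (hence compact) subset of $\policySpace$.
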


\begin{proof}
To prove the existence of at least one \besteffort policy in an RMDP $\mathcal{M}$, we show the existence of a maximal element (with respect to the dominance order) in the set of policies $\policySpace$ in $\mathcal{M}$, \ie a \besteffort policy.

We start by showing that the value functions induced by policies are smooth rational functions over the choice of transition probabilities~\citep[Theorem~6.1.1]{DBLP:books/wi/Puterman94}, and are, therefore, well-defined and comparable over the uncertainty set.
The expected return $V^\policy_P \in \RR^{|S|}$ under the policy $\policy \in \policySpace$ and transition function $P \in \cP$ is written in matrix form as
\begin{equation}
    \label{eq:thm:existence:proof1}
    V^\policy_P = (I - \gamma P^\policy)^{-1} R^\policy,
\end{equation}
where $P^\policy \in \RR^{|S| \times |S|}$ and $R^\policy \in \RR^{|S|}$ are the matrix and vector forms of \cref{eq:P_policy,eq:R_policy} over all states, respectively.
Thus, the expected return $\rho^\policy_P = \langle \initState, V^\policy_P \rangle$ for a fixed policy $\policy$ is a smooth rational function over the transition function $P \in \cP$, which is called the \emph{solution function}.\footnote{For details on such solution functions for parametric MDPs, we refer to~\citet{DBLP:journals/fmsd/JungesAHJKQV24}.}
Note that, since the solution functions induced by policies are smooth, any strict improvement occurs over a nontrivial (non-measure-zero) subset of the domain.

Recall that the dominance relation $\geq_{\cP}$ in \cref{def:rmdp:strict_dominance} is defined as follows:
\begin{center}
    \begin{minipage}{0.95\columnwidth}%
        \emph{
  Let $\policy, \policy' \in \policySpace$ be policies for the RMDP $\rmdp$.
    The policy $\policy$ \emph{dominates} $\policy'$, written $\policy \geq_{\cP} \policy'$, if and only if $\rho^\policy_P \geq \rho^{\policy'}_P$ for all $P \in \cP$.
        }
    \end{minipage}%
\end{center}

This relation is reflexive, transitive, and antisymmetric.
Thus, under this ordering, the set of policies $\policySpace$ becomes a partially ordered set.
A policy is \besteffort if it is a maximal element in this partially ordered set, \ie there is no other policy that dominates it over all $P \in \cP$.

To apply Zorn’s Lemma~\citep{Halmos1974} and establish the existence of a maximal element, we show that every chain (\ie a totally ordered subset) $C \subseteq \policySpace$ has an upper bound.
As mentioned, each policy $\policy \in C$ induces an expected return of $\rho^\policy_P = \langle \initState, V^\policy_P \rangle$, which is a smooth rational function of $P \in \cP$.
Because the policies in $C$ are totally ordered, their corresponding returns $\rho^\policy_{P}$ form a \emph{pointwise increasing} chain: for any $\policy, \policy' \in C$, either $\rho^\policy_P \geq \rho^{\policy'}_P$ or vice versa for all $P \in \cP$.

Due to the smoothness and pointwise comparability of these functions, and the compactness of the uncertainty set $\cP$, we can define the pointwise supremum $\rho^{\bar{\policy}}_P$ of the chain as the return value of a policy $\bar{\policy} \in \argmax_{\policy \in C} \rho^{\policy}_P$.
The policy space $\policySpace$ is the set of all probability distributions over the finite set of actions $A$ and is, hence, compact.
Moreover, since the mapping $\policy \mapsto \rho^\policy_P$ is continuous for each fixed $P$, it follows that the policy $\bar{\policy}$ realizing this supremum exists in $\policySpace$, \ie $\bar{\policy} \in \policySpace$.
That is, $\bar{\policy}$ realizes the pointwise supremum and thus serves as an upper bound of the chain $C$ in $\policySpace$.
Thus, every chain has an upper bound.

Because the policy space is nonempty and every chain has an upper bound, Zorn’s Lemma~\citep{Halmos1974} guarantees the existence of at least one maximal element with respect to dominance.
By \Cref{def:rmdp:besteffort}, these maximal elements are precisely the \besteffort policies.
Hence, $\policySpace_\BE$ is nonempty.
\end{proof}

\orbeexistence*
\begin{proof}
By definition, an \optimalrobust policy $\policy^{\star} \in \policySpace^\star$, which always exists by construction, maximizes the expected return under the worst-case (fully adversarial) transition function.
This means no other policy can strictly dominate $\policy^{\star}$ in that adversarial transition function, that is, $\nexists\ \policy \in \policySpace \setminus \{\policy^\star\} \text{ s.t. } \rho^\policy_\cP > \rho^{\policy^{\star}}_\cP$ where $\rho^\policy_\cP$ and $\rho^{\policy^{\star}}_\cP$ represents the robust expected return (as defined in \Cref{eq:RMDP:robust_return}) for policy $\policy$ and $\policy^{\star}$, respectively.

Moreover, when this other policy $\policy$ is non-\optimalrobust, \ie $\pi \in \policySpace \setminus \policySpace^\star$, we can also deduce that $\policy^{\star}$ cannot be strictly dominated by $\policy$.
More precisely, since $\rho^{\policy^\star}_\cP = \min_{P \in \cP} \rho^\policy_P$, it holds that $\rho^{\policy^\star}_{P'} \geq \rho^{\policy^\star}_\cP$ for all $P' \in \cP$.
By letting $P' \in \argmin_{P \in \cP} \rho^{\policy}_P$, we thus obtain $\rho^{\policy^\star}_{P'} \geq \rho^{\policy^\star}_\cP > \rho^{\policy}_\cP = \rho^{\policy}_{P'}$, which implies that $\policy \not>_{\cP} \policy^\star$.

Then, two cases arise:

\begin{enumerate}
  \item For all policies $\policy \in \Pi$, $\rho^\policy_\cP < \rho^{\policy^{\star}}_\cP$.
  In this case, $\policy^{\star}$ is the unique \optimalrobust policy and is trivially \besteffort as it is a maximal element under strict dominance, and thus cannot be strictly dominated by any other policy (as detailed above).
  Hence, $\policy^{\star} \in \policySpace_\BE$ and so $\policySpace^\star \cap \policySpace_\BE \neq \emptyset$.

  \item There exists at least one other policy $\policy' \in \policySpace^\star \setminus \{\policy^\star\}$ such that $\rho^{\policy{'}}_\cP = \rho^{\policy^{\star}}_\cP$.
  Here, the set of \optimalrobust policies is not a singleton.
  Among these policies, there must exist at least one \besteffort policy.
  This follows from the same reasoning based on partial orders and maximal elements as in the proof of \Cref{th:rmdp:existence:besteffort}.
  In particular, if no \besteffort policy existed, the set of policies ordered by $\geq_{\cP}$ would have no maximal element, contradicting the existence of such elements ensured by Zorn’s Lemma.
  Thus, the intersection is nonempty.
\end{enumerate}

In either case, there exists at least one policy that is both \optimalrobust and \besteffort.
\end{proof}

\Cref{fig:rbe:existance} summarizes this, showing that the orange region (\optimalrobust policies), the blue region (\besteffort policies), and their overlapping area (\RBE policies) are all nonempty.

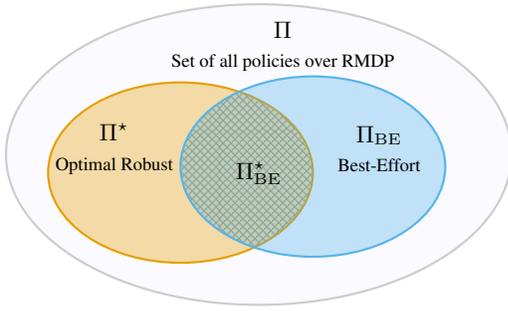
\begin{figure}[t!]
\centering
    \begin{tikzpicture}[scale=0.8]

\definecolor{cborange}{RGB}{230,159,0}
\definecolor{cbsky}{RGB}{86,180,233}

\draw[thick, fill=blue!10, opacity=0.2] (0,0) ellipse (4.2cm and 2.5cm);
\node[align=center]  at (0.4,1.8) {\small $\policySpace$\\\scriptsize Set of all policies over RMDP};

\begin{scope}
  \clip (-1.3,-0.3) ellipse (2.2cm and 1.5cm);
  \fill[pattern=crosshatch, pattern color=black!40]
        (0.9,-0.2) ellipse (2.2cm and 1.5cm);
\end{scope}

\draw[thick, cborange] (-1.3,-0.3) ellipse (2.2cm and 1.5cm);
\fill[cborange, opacity=0.35] (-1.3,-0.3) ellipse (2.2cm and 1.5cm);
\node[align=center] at (-2.4,0.1) 
  {\small $\policySpace^\star$\\\scriptsize \OptimalRobust};

\draw[thick, cbsky] (0.9,-0.2) ellipse (2.2cm and 1.5cm);
\fill[cbsky, opacity=0.35] (0.9,-0.2) ellipse (2.2cm and 1.5cm);
\node[align=center] at (2,0.1) 
  {\small $\policySpace_\BE$\\\scriptsize \BestEffort};

\node[align=center] at (0,-0.3)
  {\small $\policySpace^\star_\BE$};

\end{tikzpicture}
\caption{Structure of the policy space in an RMDP.
The gray ellipse represents the set of all policies admissible in the RMDP.
The orange region denotes the set of \emph{\optimalrobust} ($\policySpace^\star$), while the blue region indicates the set of \emph{\besteffort policies} ($\policySpace^\star_\BE$).
The area where the two regions overlap corresponds to the \RBE policies ($\policySpace^\star \cap \policySpace_\BE = \policySpace^\star_\BE$).}
\label{fig:rbe:existance}
\end{figure}

\subsection{Proofs of \Cref{corr:char:maxmin_unique,corr:char:maxmax_unique,corr:char:derivative}}\label{app:characterization:proofs}
For completeness, we provide the proofs of \Cref{corr:char:maxmin_unique,corr:char:maxmax_unique,corr:char:derivative}, presented in \Cref{sec:characterization}.

\maxminUnique*
\begin{proof}
    The proof follows immediately from \cref{th:rmdp:existence:robustbesteffort}, which states that the set of \optimalrobust policies $\policySpace^\star$ always contains a \besteffort policy.
    Thus, if there is a single \optimalrobust policy, this policy must also be \besteffort.
\end{proof}

\maxmaxUnique*
\begin{proof}
    First, \cref{th:rmdp:existence:robustbesteffort} states that there exists an \RBE policy within $\check\policySpace^\star$.
    Second, the uniqueness of $\policy^\star \in \hat\policySpace^\star$ implies there exists $P \in \cP$ such that $\rho^{\policy^\star}_P > \rho^{\policy'}_P$ for all $\policy' \in \check\policySpace^\star$.
    Thus, there is no policy in $\check\policySpace^\star$ that strictly dominates $\policy^\star$, which proves that $\policy^\star$ is \RBE.
\end{proof}

\derivative*
\begin{proof}
    We will show that, for the policy $\policy^\star$, there exists a $P \in \cP$ such that $\rho^{\policy^\star}_P > \rho^{\policy'}_P$ for all $\policy' \in \policySpace^\star \setminus \{\policy^\star\}$, and thus, $\policy^\star$ is \RBE by \cref{thm:ORBE_char}.

    First, suppose that $\policy' \in \policySpace^\star \setminus (\policySpace^\star_{(1)} \cup \{\policy^\star\})$.
    As $\policy'$ is not in $\policySpace^\star_{(1)} = \argmax_{\policy \in \policySpace^\star} \rho^\policy_{P^\star}$, it holds that $\rho^{\policy^\star}_{P^\star} > \rho^{\policy'}_{P^\star}$.
    Thus, $\policy^\star$ cannot be strictly dominated by $\policy'$.

    On the other hand, suppose that $\policy' \in \policySpace^\star_{(1)} \setminus \{\policy^\star\}$.
    In this case, it holds that $\rho^{\policy^\star}_{P^\star} = \rho^{\policy'}_{P\star}$.
    Because the expected return is a smooth function and $\mathbf{v}$ points inside the uncertainty set, the condition $\nabla_{\mathbf{v}} Z^{\policy^\star}_{P^\star,\bar{s}}(P^\star_{\bar{s}}) > \nabla_{\mathbf{v}} Z^{\policy'}_{P^\star,\bar{s}}(P^\star_{\bar{s}})$ implies that there exists $\lambda > 0$ such that 
    \[
    Z^{\policy^\star}_{P^\star,\bar{s}}(P^\star_{\bar{s}} + \lambda \mathbf{v}) > Z^{\policy'}_{P^\star,\bar{s}}(P^\star_{\bar{s}} + \lambda \mathbf{v}).
    \]
    As this condition holds for every state $\bar{s} \in \States$ and the policies $\policy^\star$ and $\policy'$ must differ in at least one state, it follows that, also in the second case, $\policy'$ cannot strictly dominate $\policy^\star$.
    Thus, $\policy^\star$ cannot be strictly dominated by any policy $\policy' \in \policySpace^\star \setminus \{\policy^\star\}$, so we conclude that $\policy^\star$ is \RBE by \cref{thm:ORBE_char}.
\end{proof}

\subsection{Rational form of the value function.}
\label{appendix:proof:rational}

Toward the proof of \cref{thm:char:complete}, we show in the following \cref{lemma:rational} that the parametric transition function $Z^\policy_{P,\bar{s}}(P_{\bar{s}})$ in \cref{def:parametric_V} is a rational function of degree one (\ie a fraction of two linear functions).
In the proof \cref{lemma:rational}, we use the \emph{Sherman-Morrison formula}, a well-known matrix identity~\cite[Sect.~2.1.4]{Golub2013MatrixComputations}, which states that, for a nonsingular matrix $A \in \RR^{n \times n}$, vectors $u, v \in \RR^n$, and $1 + v^\top A^{-1} u \neq 0$, it holds that
\begin{equation}
    \label{eq:ShermanMorrison}
    (A + uv^\top)^{-1} = A^{-1} - \frac{A^{-1} uv^\top A^{-1}}{1 + v^\top A^{-1} u}.
\end{equation}
In other words, the inverse of a rank-one update to a nonsingular matrix $A$ can be expressed in terms of the inverse of $A$ itself, as long as $1 + v^\top A^{-1} u \neq 0$.
For a derivation of \cref{eq:ShermanMorrison}, we refer to~\citet[Sect.~2.1.4]{Golub2013MatrixComputations}.

Recall that $P_{\bar{s}} \colon A \to \dist{S}$ is a function from actions to distributions over states.
For notational simplicity, we will also interpret $P_{\bar{s}}(a)$, $a \in A$, as a vector in $\RR^{|S|}$, and $P_{\bar{s}}$ as a matrix in $\RR^{|A| \times |S|}$.

\begin{restatable}[Value function as rational]{lemma}{rational}
    \label{lemma:rational}
	For any policy $\policy$, state $\bar{s} \in S$, and partial transition function $P_{-\bar{s}}$, the value $Z^\policy_{P,\bar{s}}(P_{\bar{s}})$ in state $\bar{s}$ as a function of the completion $P_{\bar{s}} \in \cP_{\bar{s}}$ can be written as a rational function of the form
	\begin{equation}
		\label{eq:lemma:rational}
		Z^\policy_{P,\bar{s}}(P_{\bar{s}}) = \frac{R^\policy(\bar{s}) + \sum_{a \in A} \alpha_a^\top P_{\bar{s}}(a) }{1 - \sum_{a \in A} \varphi_a^\top P_{\bar{s}}(a)},
	\end{equation}
        where $R^\policy$ is defined by \cref{eq:R_policy}, and for all $a \in A$, the coefficients $\alpha_a \in \RR_{\geq 0}^{|S|}$ and $\varphi_a \in \RR_{\geq 0}^{|S|}$ are defined appropriately.
\end{restatable}

\begin{proof}
    The expected return $V^\policy_P \in \RR^{|S|}$ under the policy $\policy$ and transition function $P$ is written in matrix form as
    \begin{equation}
        \label{eq:lemma:rational:proof1}
        V^\policy_P = (I - \gamma P^\policy)^{-1} R^\policy,
    \end{equation}
    where $P^\policy \in \RR^{|S| \times |S|}$ and $R^\policy \in \RR^{|S|}$ are the matrix and vector forms of \cref{eq:P_policy,eq:R_policy} over all states, respectively.
    Define $e_{s} \in \RR^{|S|}$ as the vector with value $1$ in entry $s$ only and $0$ otherwise, and conversely, define $e_{\neg s} \in \RR^{|S|}$ as the vector with value $0$ in entry $s$ only and $1$ otherwise.
    Then, \cref{eq:lemma:rational:proof1} can be decomposed as
    \begin{equation}
        \label{eq:lemma:rational:proof2}
        V^\policy_{P} = \left(I 
        - \gamma \cdot \text{diag}(e_{\neg \bar{s}}) P^\policy 
        - \gamma \cdot \text{diag}(e_{\bar{s}}) P^\policy
        \right)^{-1} R^\policy.
    \end{equation}
    For \cref{lemma:rational}, we are given a partial transition function $P_{-\bar{s}}$, which thus fixes $P^\policy(s)$ is fixed for all $s \in \States \setminus \{\bar{s}\}$.
    Hence, by interpreting the completion $P_{\bar{s}}$ as a vector in $\RR^{|S|}$, we can rewrite \cref{eq:lemma:rational:proof2} as
    \begin{equation}
        \label{eq:lemma:rational:proof3}
        V^\policy_{P} = \left(G - \gamma \cdot e_{\bar{s}} P_{\bar{s}}^\top \right)^{-1} R^\policy,
    \end{equation}
    where $G = I - \gamma \cdot \text{diag}(e_{\neg \bar{s}}) P^\policy$ is fixed.
    As $\gamma < 1$, the matrix $G$ is nonsingular, so we can apply the Sherman-Morrison formula from \cref{eq:ShermanMorrison} (with $A \coloneqq G$, $u \coloneqq -\gamma \cdot e_{\bar{s}}$ and $v \coloneqq P_{\bar{s}})$ to obtain
    \begin{align*}
        &\left(G - \gamma \cdot e_{\bar{s}} P_{\bar{s}}^\top \right)^{-1}
        =
        G^{-1} + \frac{G^{-1} (\gamma \cdot e_{\bar{s}} P_{\bar{s}}^\top) G^{-1}}{1 - P_{\bar{s}}^\top G^{-1} \gamma \cdot e_{\bar{s}}},
    \end{align*}
    where $P_{\bar{s}}^\top G^{-1} \gamma \cdot e_{\bar{s}} \in [0,1)$ for $\gamma < 1$ (which we consider; see~\cref{def:MDP}).
    Thus, $V^\policy_P$ in \cref{eq:lemma:rational:proof3} is rewritten as
    \begin{align}
        \label{eq:lemma:rational:proof4}
        V^\policy_P &= \left(
            G^{-1} + \frac{G^{-1} (\gamma \cdot e_{\bar{s}} P_{\bar{s}}^\top) G^{-1}}{1 - P_{\bar{s}}^\top G^{-1} \gamma \cdot e_{\bar{s}}}
        \right) R^\policy
        \\
        \nonumber
        &=
        \left(
        \frac{
            G^{-1} \big[ 1 - P_{\bar{s}}^\top G^{-1} \gamma \cdot e_{\bar{s}} + (\gamma \cdot e_{\bar{s}} P_{\bar{s}}^\top) G^{-1} \big]
        }
        {1 - P_{\bar{s}}^\top G^{-1} \gamma \cdot e_{\bar{s}}}
        \right) R^\policy,
    \end{align}
    which has a numerator and a denominator that are both linear in $P_{\bar{s}}$. Thus, \cref{eq:lemma:rational:proof4} can be written as a rational function between two linear functions.
    Let us write $V^\policy_P = [V^\policy_P(s_1), \ldots,V^\policy_P(s_{|\States|})]$.
    By \cref{def:parametric_V}, we have that $Z^\policy_{P,\bar{s}}(P_{\bar{s}}) = V^\policy_{P_{-\bar{s}} \times P_{\bar{s}}}(\bar{s}) = V^\policy_P(\bar{s})$, yielding the rational form in \cref{eq:lemma:rational} with appropriate coefficients $\alpha_a$ and $\varphi_a$ for all $a \in \Actions$.
    Finally, the domains of $\alpha_a$ and $\varphi_a$ follow from the fact that the rewards and transition probabilities are nonnegative.
\end{proof}

Intuitively, the numerator $R^\policy(\bar{s}) + \sum_{a \in A} \alpha_a^\top P_{\bar{s}}(a)$ in \cref{eq:lemma:rational} is the sum of the immediate reward and the future discounted reward along paths that do \emph{not} loop back to state $\bar{s}$.
Moreover, the term $\sum_{a \in A} \varphi_a^\top P_{\bar{s}}(a)$ in the denominator is the discounted probability of (eventually) looping back to state $\bar{s}$ (and thus acts as a normalizing constant).
For any $\gamma < 1$, this probability is strictly smaller than one.

\subsection{Proof of \Cref{thm:char:complete}}
\label{appendix:proof:completeness}
In this section, we use \cref{lemma:rational} to provide the proof of \Cref{thm:char:complete}, which states that, for any RMDP, our characterization yields an \RBE policy.

\subsubsection{Equivalence of policies.}
First, we show that if two policies attain the same expected return and derivatives under two distinct transition functions, then these policies attain the same expected return on an entire line segment in the space of transition functions.
This intuition is formalized by \Cref{lemma:equivalence_on_line}. 

\begin{restatable}[Equivalence along $P_{\bar{s}}$ line segment]{lemma}{equivalenceOnLine}
    \label{lemma:equivalence_on_line}
    Let $\policy, \policy' \in \policySpace$ be two policies, let $\bar{s} \in S$ be a state, let $P \in \cP$ be a transition function, and let $P^{(i)}_{\bar{s}} \in \cP_{\bar{s}}$, $i=1,2$ be two distinct transition functions in state $\bar{s}$.
    Define $\mathbf{v} = P^{(2)}_{\bar{s}} - P^{(1)}_{\bar{s}}$.
    If, for all $i=1,2$, it holds that
    \begin{subequations}
    \label{eq:equivalence_on_line}
    \begin{align}
        Z^{\policy}_{P,\bar{s}}(P^{(i)}_{\bar{s}}) &= Z^{\policy'}_{P,\bar{s}}(P^{(i)}_{\bar{s}}),
        \\
        \nabla_\mathbf{v} Z^{\policy}_{P,\bar{s}}(P^{(i)}_{\bar{s}}) &= \nabla_\mathbf{v} Z^{\policy'}_{P,\bar{s}}(P^{(i)}_{\bar{s}}),
    \end{align}
    \end{subequations}
    then the expected returns are the same on the entire line segment between points $P^{(1)}_{\bar{s}}$ and $P^{(2)}_{\bar{s}}$, \ie for all $\lambda \in [0,1]$,
    \begin{equation}
        \label{eq:equivalence_on_line2}
        Z^{\policy}_{P,\bar{s}}( q ) = Z^{\policy'}_{P,\bar{s}}( q ) \quad \forall q = \lambda P^{(1)}_{\bar{s}} + (1-\lambda) P^{(2)}_{\bar{s}}.
    \end{equation}
\end{restatable}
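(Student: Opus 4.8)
The plan is to collapse the multivariate identity to a one-dimensional statement about rational functions, exploiting the degree-one rational form from \cref{lemma:rational}. Concretely, I would parametrise the segment by $q(t) = P^{(1)}_{\bar{s}} + t\,\mathbf{v}$ for $t \in [0,1]$, so that $q(0) = P^{(1)}_{\bar{s}}$, $q(1) = P^{(2)}_{\bar{s}}$, and $q(t) \in \cP_{\bar{s}}$ throughout by convexity of the uncertainty set. Substituting $q(t)$ into \cref{eq:lemma:rational} for each policy, the restricted value functions $f(t) := Z^{\policy}_{P,\bar{s}}(q(t))$ and $h(t) := Z^{\policy'}_{P,\bar{s}}(q(t))$ become \emph{univariate} rational functions $f = n_\policy/m_\policy$ and $h = n_{\policy'}/m_{\policy'}$, where all four of $n_\policy, m_\policy, n_{\policy'}, m_{\policy'}$ are affine in $t$ (an affine reparametrisation composed with the affine numerator/denominator of \cref{eq:lemma:rational} stays affine). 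The denominators are restrictions of $1 - \sum_{a} \beta_a^\top P_{\bar{s}}(a)$, which is strictly positive on $\cP_{\bar{s}}$ since $\gamma < 1$; being affine and positive at both endpoints, each denominator stays bounded away from $0$ on all of $[0,1]$.

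Next I would study the difference $g(t) := f(t) - h(t) = N(t)/D(t)$, with numerator $N(t) := n_\policy(t)\,m_{\policy'}(t) - n_{\policy'}(t)\,m_\policy(t)$ and denominator $D(t) := m_\policy(t)\,m_{\policy'}(t)$. Here $D(t) > 0$ on $[0,1]$, so $g(t) = 0$ if and only if $N(t) = 0$, and $N$ is a difference of products of affine functions, hence a polynomial with $\deg N \le 2$. The goal then reduces to proving $N \equiv 0$. Since $\mathbf{v}$ is exactly the tangent direction of the segment, the directional derivative equals the $t$-derivative of the restriction, $\nabla_\mathbf{v} Z^{\policy}_{P,\bar{s}}(q(t_i)) = f'(t_i)$ (and analogously for $\policy'$), so hypothesis \cref{eq:equivalence_on_line} translates, at $t_1 = 0$ and $t_2 = 1$, into $g(t_i) = 0$ and $g'(t_i) = 0$.

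The key step is to push these conditions onto $N$. From $g(0) = 0$ and $D(0) > 0$ I get $N(0) = 0$; differentiating $g = N/D$ gives $g' = (N'D - N D')/D^2$, and substituting $N(0) = 0$ leaves $g'(0) = N'(0)/D(0)$, so $g'(0) = 0$ forces $N'(0) = 0$. The identical argument at $t = 1$ yields $N(1) = 0$. Now $N(0) = 0$ together with $N'(0) = 0$ makes $t = 0$ a root of multiplicity at least two of the degree-$\le 2$ polynomial $N$, so $N(t) = c\,t^{2}$ for some constant $c$; the remaining condition $N(1) = 0$ gives $c = 0$, hence $N \equiv 0$. Therefore $g \equiv 0$ on $[0,1]$, i.e. $f(t) = h(t)$ throughout, which is exactly \cref{eq:equivalence_on_line2} after the reparametrisation $\lambda \leftrightarrow 1 - t$.

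I expect the main obstacle to be the careful handoff from the derivative hypothesis to $N'(t_i) = 0$: this genuinely needs the value equality \emph{first} (to annihilate the $N D'$ term in $g'$) together with strict positivity of $D$ (which rests on $\gamma < 1$). Everything after that is a degree count, and the conditions at the second endpoint are in fact more than needed — a degree-$\le 2$ polynomial with a double root at $0$ and a single root at $1$ is already forced to vanish — which provides welcome slack in the argument.
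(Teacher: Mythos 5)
Your proof is correct, and it shares the paper's key reduction---restricting both incomplete value functions to the segment so that, by \cref{lemma:rational}, each becomes a univariate rational function with affine numerator and denominator---but it finishes the argument by a genuinely different route. The paper writes each restriction as $Y(\lambda) = (\tilde a + \tilde b\lambda)/(\tilde c + \tilde d\lambda)$ and then \emph{asserts} that the four Hermite constraints (value and derivative at the two endpoints) fully determine the coefficients up to joint scaling, so the two restrictions must coincide; this uniqueness-of-interpolation claim is really the crux and is left unproven. You instead form the difference $g = N/D$, note that $D = m_\policy m_{\policy'}$ is strictly positive on the segment (affine denominators, positive at the endpoints since $\sum_a \beta_a^\top P_{\bar s}(a) < 1$ when $\gamma < 1$), and push the hypotheses onto the numerator $N$, a polynomial of degree at most two: a double root at $t=0$ (where annihilating the $ND'$ term indeed requires using $N(0)=0$ first, as you flag) plus a root at $t=1$ forces $N \equiv 0$. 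This root-counting step supplies exactly the rigour the paper's coefficient-matching argument glosses over, and it parallels the quadratic-numerator reasoning the paper itself deploys only later, for the difference function $L^{\policy,\policy'}_{P,\bar{s}}$ in the proof of \cref{thm:char:complete}. The two approaches are otherwise equivalent in scope and hypotheses; yours is the more self-contained and, as you note, even has slack (three roots with multiplicity where a degree-$\le 2$ polynomial only needs that many to vanish identically).
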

\begin{proof}
    The conditions above state that there exist two policies $\policy$ and $\policy'$ that attain the same values in two different points $P_{\bar{s}}^{(1)}$ and $P_{\bar{s}}^{(2)}$.
    Furthermore, the directional derivatives $\nabla_\mathbf{v} Z^{\policy}_{P,\bar{s}}(P^{(i)}_{\bar{s}})$ and $\nabla_\mathbf{v} Z^{\policy'}_{P,\bar{s}}(P^{(i)}_{\bar{s}})$, $i=1,2$, in the direction $\mathbf{v}$ of the line segment connecting $P_{\bar{s}}^{(1)}$ and $P_{\bar{s}}^{(2)}$ are also equal.
    We will show that these four constraints are sufficient for \cref{eq:equivalence_on_line2} to hold.    

    Recall from \cref{lemma:rational} that for any policy $\policy$, the value function $Z^\policy_{P,\bar{s}}(P_{\bar{s}})$ in state $\bar{s}$ is a rational function of the form
    \[
    Z^\policy_{P,\bar{s}}(P_{\bar{s}}) = \frac{R^\policy(\bar{s}) + \sum_{a \in A} \alpha_a^\top P_{\bar{s}}(a) }{1 - \sum_{a \in A} \varphi_a^\top P_{\bar{s}}(a)},
    \]
    with appropriate coefficients $\alpha_a$ and $\varphi_a$, $a \in \Actions$.
    For the proof of \cref{lemma:equivalence_on_line}, we only need to consider the values of $Z^\policy_{P,\bar{s}}(P_{\bar{s}})$ for values $P_{\bar{s}} \in \{ \lambda P^{(1)}_{\bar{s}} + (1-\lambda) P^{(2)}_{\bar{s}} : \lambda \in [0,1] \}$.
    In other words, we restrict the value function to the line segment between $P^{(1)}_{\bar{s}}$ and $P^{(2)}_{\bar{s}}$.
    Thus, we may further simplify the multivariable function $Z^\policy_{P,\bar{s}}(P_{\bar{s}})$ as the univariate function $Y^\policy_{P,\bar{s}} \colon [0,1] \to \RR$ defined for all $\lambda \in [0,1]$ as
    \begin{equation}
        Y^\policy_{P,\bar{s}}(\lambda) 
        = Z^\policy_{P,\bar{s}}( \lambda P^{(1)}_{\bar{s}} + (1-\lambda) P^{(2)}_{\bar{s}} ) 
        = \frac{\tilde{a} + \tilde{b} \lambda}{\tilde{c} + \tilde{d} \lambda},
    \end{equation}
    with appropriate coefficients $\tilde{a} \in \RR$, $\tilde{b} \in \RR$, $\tilde{c} \in \RR$, $\tilde{d} \in \RR$.

    Thus, the value function on the line segment between $P^{(1)}_{\bar{s}}$ and $P^{(2)}_{\bar{s}}$ is defined by four parameters.
    At the same time, we have four constraints on $Y^\policy_{P,\bar{s}}$, given by
    \begin{alignat*}{2}
    Y^\policy_{P,\bar{s}}(0) &= \mu_1, \qquad
    \frac{\partial Y^\policy_{P,\bar{s}}(0)}{\partial\lambda} &&= \delta_1,
    \\
    Y^\policy_{P,\bar{s}}(1) &= \mu_2, \qquad
    \frac{\partial Y^\policy_{P,\bar{s}}(1)}{\partial\lambda} &&= \delta_2,
    \end{alignat*}
    which together fully define these four coefficients of $Y^\policy_{P,\bar{s}}$.\footnote{In fact, the function $Y^\policy_{P,\bar{s}}$ is invariant to joint scaling of the parameters and is, thus, already uniquely defined by three constraints.}
    Therefore, any two policies $\policy$ and $\policy'$ satisfying the conditions in \cref{eq:equivalence_on_line} must lead to the same coefficients $\tilde{a} \in \RR$, $\tilde{b} \in \RR$, $\tilde{c} \in \RR$, $\tilde{d} \in \RR$, and thus satisfy \cref{eq:equivalence_on_line2}.  
\end{proof}

We will use \cref{lemma:equivalence_on_line} to investigate the geometry of the value functions $Z^{\policy}_{P,\bar{s}}$ and $Z^{\policy'}_{P,\bar{s}}$.
For convenience, we simplify the rational function $Z^\policy_{P,\bar{s}}(P_{\bar{s}})$ defined by \cref{eq:lemma:rational} as
\begin{equation}
\label{eq:simplified_rational}
f^\policy_{P,\bar{s}}(x) = \frac{\ba + \bb^\top x}{1 - \bc^\top x},
\end{equation}
where $x \in \RR^{|A| \cdot |S|}$ represents $P_{\bar{s}}(a)$ concatenated for all $a \in A$, and with the appropriate coefficients $\ba \in \RR$, $\bb \in \RR^{|A| \cdot |S|}$, and $\bc \in \RR^{|A| \cdot |S|}$.
Using this notation, let $L^{\policy,\policy'}_{P,\bar{s}}(x)$ denote the difference between the rationals of two policies $\policy$ and $\policy$, \ie
\begin{align}
    \label{eq:rational_difference}
    & {} L^{\policy,\policy'}_{P,\bar{s}}(x) 
    = \frac{\ba + \bb^\top x}{1 - \bc^\top x} - \frac{\ba' + (\bb')^\top x}{1 - (\bc')^\top x}
    \\
    &\!\!\!= \frac{x^\top \diag{\bb' \bc - \bb \bc'} x + (\ba' \bc - \ba \bc' + \bb - \bb')^\top x + \ba - \ba'}{(1 - \bc^\top x)(1 - (\bc')^\top x)},
    \nonumber
\end{align}
where the vector-vector multiplication is element-wise.

\subsubsection{Proof of \cref{thm:char:complete}.}
Observe that \cref{eq:rational_difference} is a rational function in $x$ of degree one in the numerator and two in the denominator. 
Under the conditions required in \cref{lemma:equivalence_on_line}, the function $L^{\policy,\policy'}_{P,\bar{s}}(x)$ is zero on a line segment.
In other words, the numerator in \cref{eq:rational_difference} is zero on a particular line segment.
Crucially, observe that a quadratic function is zero on a line segment \emph{only if the quadratic term cancels out}, \ie if $\bb'\bc-\bb\bc' = 0$.
This fact leads to the following theorem, which is our final characterization of \RBE policies. %

\completeness*

\begin{proof}
    Observe that the policy $\policy^\star$ satisfies one of the following three points:
    \begin{enumerate}
        \item $\hat\policySpace^\star$, $\hat\policySpace^\star_{(1)}$, or $\hat\policySpace^\star_{(2)}$ is a singleton, so that $\policy^\star$ is \RBE by \cref{corr:char:maxmin_unique} or \cref{corr:char:maxmax_unique};
        \item $\hat\policySpace^\star$, $\hat\policySpace^\star_{(1)}$, and $\hat\policySpace^\star_{(2)}$ are no singletons but either \cref{eq:completeness:deriv-max} or \cref{eq:completeness:deriv-min} holds with strict inequality, so that $\policy^\star$ is \RBE by \cref{corr:char:derivative};
        \item $\hat\policySpace^\star$, $\hat\policySpace^\star_{(1)}$, and $\hat\policySpace^\star_{(2)}$ are no singletons and \cref{eq:completeness:deriv} with non-strict inequality.
    \end{enumerate}
    We will prove \cref{thm:char:complete} by showing that, even in the third case, the policy $\policy^\star$ is \RBE.
    As $\policySpace^\star_{(2)}$ is not unique, there exists a policy $\policy' \in \policySpace^\star_{(2)} \setminus \{\policy^\star\}$ that, for all $\bar{s} \in S$, satisfies:
    \begin{align*}
        Z^{\policy^\star}_{P^{(1)},\bar{s}}(P^{(1)}_{\bar{s}}) &= Z^{\policy'}_{P^{(1)},\bar{s}}(P^{(1)}_{\bar{s}}), \\
        Z^{\policy^\star}_{P^{(2)},\bar{s}}(P^{(2)}_{\bar{s}}) &= Z^{\policy'}_{P^{(2)},\bar{s}}(P^{(2)}_{\bar{s}}), \\
        \nabla_{\mathbf{v}} Z^{\policy^\star}_{P^{(1)},\bar{s}}(P^{(1)}_{\bar{s}}) &= \nabla_{\mathbf{v}} Z^{\policy'}_{P^{(1)},\bar{s}}(P^{(1)}_{\bar{s}}) \\
        \nabla_{\mathbf{v}} Z^{\policy^\star}_{P^{(2)},\bar{s}}(P^{(2)}_{\bar{s}}) &= \nabla_{\mathbf{v}} Z^{\policy'}_{P^{(2)},\bar{s}}(P^{(2)}_{\bar{s}}).
    \end{align*}
    Hence, observe that the policies $\policy^\star$ and $\policy'$ satisfy the conditions in \cref{lemma:equivalence_on_line}.
    In addition, \cref{thm:char:complete} requires that, for all $\bar{s} \in S$, the line $g_{\bar{s}}(\lambda)$ between $P^{(1)}_{\bar{s}}$ and $P^{(2)}_{\bar{s}}$ either intersects the relative interior of $\cP_{\bar{s}}$, or completely covers $\cP_{\bar{s}}$.

    First, consider the case where $g_{\bar{s}}(\lambda)$ covers $\cP_{\bar{s}}$, \ie $\cP_{\bar{s}} \cap \{g_{\bar{s}}(\lambda)\} = \cP_{\bar{s}}$.
    In this case, \cref{lemma:equivalence_on_line} implies that the policies $\policy^\star$ and $\policy'$ have the same value in the entire uncertainty set.
    As a result, it holds that $\rho^{\policy'}_{P} = \rho^{\policy^\star}_{P}$ for all $P \in \cP$, so the policy $\policy^\star$ is \RBE.

    Second, consider the case where the line $g_{\bar{s}}(\lambda)$ intersects the relative interior of $\cP_{\bar{s}}$, that is, there exists $\lambda \in [0,1]$ such that $g_{\bar{s}}(\lambda) \in \relinterior{\cP_{\bar{s}}}$.
    We use the definition of $L^{\policy^\star,\policy'}_{P,\bar{s}}(x)$ in \cref{eq:rational_difference} to prove this case by contradiction.
    As such, suppose that the other policy $\policy'$ strictly dominates $\policy^\star$.
    In this case, there must exist a $P' \in \cP_{\bar{s}}$ where $\rho^{\policy'}_{P'} > \rho^{\policy^\star}_{P'}$.
    By \cref{lemma:equivalence_on_line}, the point $P'$ cannot be on the line segment between $P^{(1)}_{\bar{s}}$ and $P^{(2)}_{\bar{s}}$.
    Because the line $g_{\bar{s}}(\lambda)$ between $P^{(1)}_{\bar{s}}$ and $P^{(2)}_{\bar{s}}$ intersects the relative interior of $\cP_{\bar{s}}$, there also exists another point $P'' \in \cP$ such that the line through $P'$ and $P''$ is perpendicular to $g_{\bar{s}}(\lambda)$.
    Moreover, as the quadratic term of $L^{\policy^\star,\policy'}_{P,\bar{s}}(x)$ is zero (\ie it is a rational of degree one) and $L^{\policy^\star,\policy'}_{P,\bar{s}}(x)$ has a value of zero on $g_{\bar{s}}(\lambda)$, the fact that $\rho^{\policy'}_{P'} > \rho^{\policy^\star}_{P'}$ implies that $\rho^{\policy'}_{P''} < \rho^{\policy^\star}_{P''}$.
    In other words, the existence of a point $P'$ where $\policy'$ has \emph{higher} expected return than $\policy^\star$, implies the existence of another point $P''$ where $\policy'$ has \emph{lower} expected return than $\policy^\star$.
    Therefore, such a policy $\policy'$ that strictly dominates $\policy^\star$ cannot exist, so $\policy^\star$ is \RBE.
\end{proof}
A visualization of the proof of \cref{thm:char:complete} is given in \cref{fig:polytope}.

\begin{figure}[b!]
    \centering
        \newcommand\pgfmathsinandcos[3]{%
  \pgfmathsetmacro#1{sin(#3)}%
  \pgfmathsetmacro#2{cos(#3)}%
}

\usetikzlibrary{shadings} %

\begin{tikzpicture}[scale=2.8] 

    \pgfdeclarehorizontalshading{mygrad}{100bp}{
    color(0bp)=(red);
    color(10bp)=(red);
    color(51bp)=(white);
    color(70bp)=(green);
    color(100bp)=(green)
    }

    \pgfmathsetmacro\AngleFuite{150}
    \pgfmathsetmacro\coeffReduc{.8}
    \pgfmathsetmacro\clen{2}
    \pgfmathsinandcos\sint\cost{\AngleFuite}
    
    \begin{scope} [x     = {(\coeffReduc*\cost,-\coeffReduc*\sint)},
                   y     = {(1cm,0cm)}, 
                   z     = {(0cm,1cm)}]
        
        \newcommand\minA{0.15}
        \newcommand\maxA{0.7}
        \newcommand\minB{0.15}
        \newcommand\maxB{0.55}
        \newcommand\minC{0.1}
        \newcommand\maxC{0.8}

        \path coordinate (O) at (0,0,0)
              coordinate (A) at (1,0,0)
              coordinate (B) at (0,1,0)
              coordinate (C) at (0,0,1);
        \path coordinate (A_min_1) at (\minA,1-\minA,0)
              coordinate (A_min_2) at (\minA,0,1-\minA)
              coordinate (B_min_1) at (1-\minB,\minB,0)
              coordinate (B_min_2) at (0,\minB,1-\minB)
              coordinate (C_min_1) at (1-\minC,0,\minC)
              coordinate (C_min_2) at (0,1-\minC,\minC);
        \path coordinate (A_max_1) at (\maxA,1-\maxA,0)
              coordinate (A_max_2) at (\maxA,0,1-\maxA)
              coordinate (B_max_1) at (1-\maxB,\maxB,0)
              coordinate (B_max_2) at (0,\maxB,1-\maxB)
              coordinate (C_max_1) at (1-\maxC,0,\maxC)
              coordinate (C_max_2) at (0,1-\maxC,\maxC);
        
        \draw[gray] (A)--(B)--(C)--cycle;

        \draw[fill=gray] (A) circle (0.75pt) node[below, yshift=-0.1cm, xshift=0.2cm] {\small $P_{1}=1$};
        \draw[fill=gray] (B) circle (0.75pt) node[below] {$\small P_{2}=1$};
        \draw[fill=gray] (C) circle (0.75pt) node[right, yshift=0.1cm] {$\small P_{3}=1$};
        
        \draw[thick,-stealth,black] (O) -- (0,0,1);
        \draw[thick,-stealth,black] (O) -- (1,0,0);
        \draw[thick,-stealth,black] (O) -- (0,1,0);
        
        \draw[name path=Amin, densely dotted, thick, gray] (A_min_1) -- (A_min_2);
        \draw[name path=Bmin, densely dotted, thick, gray] (B_min_2) -- (B_min_1);
        \draw[name path=Cmin, densely dotted, thick, gray] (C_min_1) -- (C_min_2);
        
        \draw[name path=Amax, densely dotted, thick, gray] (A_max_1) -- (A_max_2);
        \draw[name path=Bmax, densely dotted, thick, gray] (B_max_2) -- (B_max_1);
        \draw[name path=Cmax, densely dotted, thick, gray] (C_max_1) -- (C_max_2);

        \begin{scope}
            \clip (A_min_1) -- (A_min_2) -- (A) -- cycle;
            \clip (A_max_1) -- (A_max_2) -- (C) -- (B) -- cycle;
            
            \clip (B_min_1) -- (B_min_2) -- (B) -- cycle;
            \clip (B_max_1) -- (B_max_2) -- (C) -- (A) -- cycle;
            
            \clip (C_min_1) -- (C_min_2) -- (C) -- cycle;
            \clip (C_max_1) -- (C_max_2) -- (B) -- (A) -- cycle;

            \shade[shading=mygrad, shading angle=21.5] (A)--(B)--(C)--cycle;
        \end{scope}

        \draw[fill=red] (0.15, 0.15, 0.7) circle (0.75pt) node[red, right] {$P^{(1)}_{\bar{s}}$};
        \draw[fill=red] (0.35, 0.55, 0.1) circle (0.75pt) node[red, below, xshift=0.3cm, yshift=0.15cm] {$P^{(2)}_{\bar{s}}$};
        \draw[name path=Amin, dotted, very thick, red] (0.15, 0.15, 0.7) -- (0.35, 0.55, 0.1);

        \draw[name path=Amin, dotted, thick, blue] (0.4, 0.3, 0.3) -- ($(0.4, 0.3, 0.3) - 1.5*(0.125, -0.1, -0.025)$);
        \draw[fill=blue!40] (0.4, 0.3, 0.3) circle (0.75pt) node[blue, below] {$P'$};
        \draw[fill=blue!40] ($(0.4, 0.3, 0.3) - 1.6*(0.125, -0.1, -0.025)$) circle (0.75pt) node[blue, above, xshift=0.2cm] {$P''$};
        
    \end{scope}

\end{tikzpicture}
    \caption{Visualization of the proof of \cref{thm:char:complete} for a convex polytopic uncertainty set $\cP_{\bar{s}}$ over three states. 
    The line segment between $P^{(1)}_{\bar{s}}$ and $P^{(2)}_{\bar{s}}$ is shown in red. 
    The color shade in the polytope depicts the difference $L^{\policy^\star,\policy'}_{P,\bar{s}}(x)$ in value between the policies $\policy^\star$ and $\policy'$.
    Red means $L^{\policy^\star,\policy'}_{P,\bar{s}}(x) < 0$, white means $L^{\policy^\star,\policy'}_{P,\bar{s}}(x) = 0$, and green means $L^{\policy^\star,\policy'}_{P,\bar{s}}(x) > 0$.
    Because $L^{\policy^\star,\policy'}_{P,\bar{s}}(x)$ is zero along the line segment and intersects the interior of the uncertainty set $\cP_{\bar{s}}$, for every point $P'$ where $L^{\policy^\star,\policy'}_{P,\bar{s}}(x) < 0$ (\ie $\policy'$ outperforms $\policy^\star$), there exists another point $P''$ where $L^{\policy^\star,\policy'}_{P,\bar{s}}(x) > 0$ (\ie $\policy'$ performs worse than $\policy^\star$).
    In particular, this point $P'' \in \cP_{\bar{s}}$ can be chosen to be any point such that the line through $P'$ and $P''$ is perpendicular to the line through $P^{(1)}_{\bar{s}}$ and $P^{(2)}_{\bar{s}}$.
    }   
\label{fig:polytope}
\end{figure}

\section{Details on Empirical Evaluation}
\label{appendix:models}
In this appendix, we provide further details about the models used in the empirical evaluation and the implementation of robust value iteration that we use.

\subsection{Gridworld Models}
We generate \emph{slippery gridworlds} of different sizes and with different numbers of obstacles, such as the instance shown in \cref{fig:gridworld_example}.
The objective for the agent is to minimize the expected number of steps to reach the target (in green) from the initial state (in blue).
Upon hitting an obstacle (in red), the agent resets to the initial state.

\paragraph{Interval MDP.}
For the interval MDP (IMDP) used in \cref{sec:experiment:prism}, we use the model depicted in \cref{fig:gridworld_imdp}.
For every direction (left, right, up, down), the agent can choose between two actions: one where the slipping probability $p$ is \emph{fixed}, and one where it belongs to the \emph{interval} $[q,p]$.
This model structure is repeated for every cell in the grid.

\paragraph{$s$-Rectangular RMDP.}
For the RMDP used in \cref{sec:experiment:rvi}, we use the model depicted in \cref{fig:gridworld_rmdp}.
Similar to the example RMDP in \cref{fig:rmdp1}, this gridworld RMDP has an $s$-rectangular uncertainty set, which is, in this case, parametrized by the maximum slipping probability $p$ and an improvement $q$.
The value of $p$ is fixed, \eg $p=0.25$, whereas the value of $q$ belongs to an interval, \eg $0 \leq q \leq 0.25$.
Thus, for $q = 0$, both action types yield the same value, whereas the \besteffort action dominates the non-\besteffort action for any $q > 0$.

\subsection{Robust Value Iteration}
In our experiments, we use two implementations of robust value iteration: one for IMDPs within the probabilistic model checker PRISM~\cite{DBLP:conf/cav/KwiatkowskaNP11}, and one for $s$-rectangular RMDPs that we implemented ourselves in Python.
Our own implementation of robust value iteration follows the standard form also described in \cref{sec:preliminaries}.
That is, given an initial policy $\policy \in \policySpace$ and uncertainty set $P \in \cP$, we iterate between the following steps:
\begin{enumerate}
    \item Given fixed $P$, for every state $s \in S$, update the policy $\policy(s)$ by maximizing the value $V(s)$ in state $s$:
    \begin{align*}    
        \policy(s) &\gets \argmax_{\policy(s) \in \dist{A}} \left\{ R^\policy(s) + \dotp{\gamma P^\policy(s)}{V} \right\},
        \\
        V(s) &\gets \max_{\policy(s) \in \dist{A}} \left\{ R^\policy(s) + \dotp{\gamma P^\policy(s)}{V} \right\}.
    \end{align*}
    \item Given fixed $\policy$ and $V$, for every state $s \in S$, update the worst-case transition function:\footnote{Here, we use $P(s,\cdot)$ to denote the transition probabilities in state $s \in S$ for all actions $a \in A$.}
    \begin{align*}
        P(s,\cdot) \gets \argmin_{ P(s,\cdot) \in \cP_s } \left\{ R^\policy(s) + \dotp{\gamma P^\policy(s)}{V} \right\},
    \end{align*}
    which we compute by solving a linear optimization program (under the assumption that $\cP_s$ is a convex polytope).
\end{enumerate}
For the gridworld experiments in \cref{sec:implementation}, the goal is to compute a policy that \emph{minimizes} the expected return.
Thus, for these experiments, we replace each $\max$ with $\min$ in the algorithm above and vice versa.

\begin{figure}[t!]
	\centering{
        \includegraphics[width=.54\linewidth]{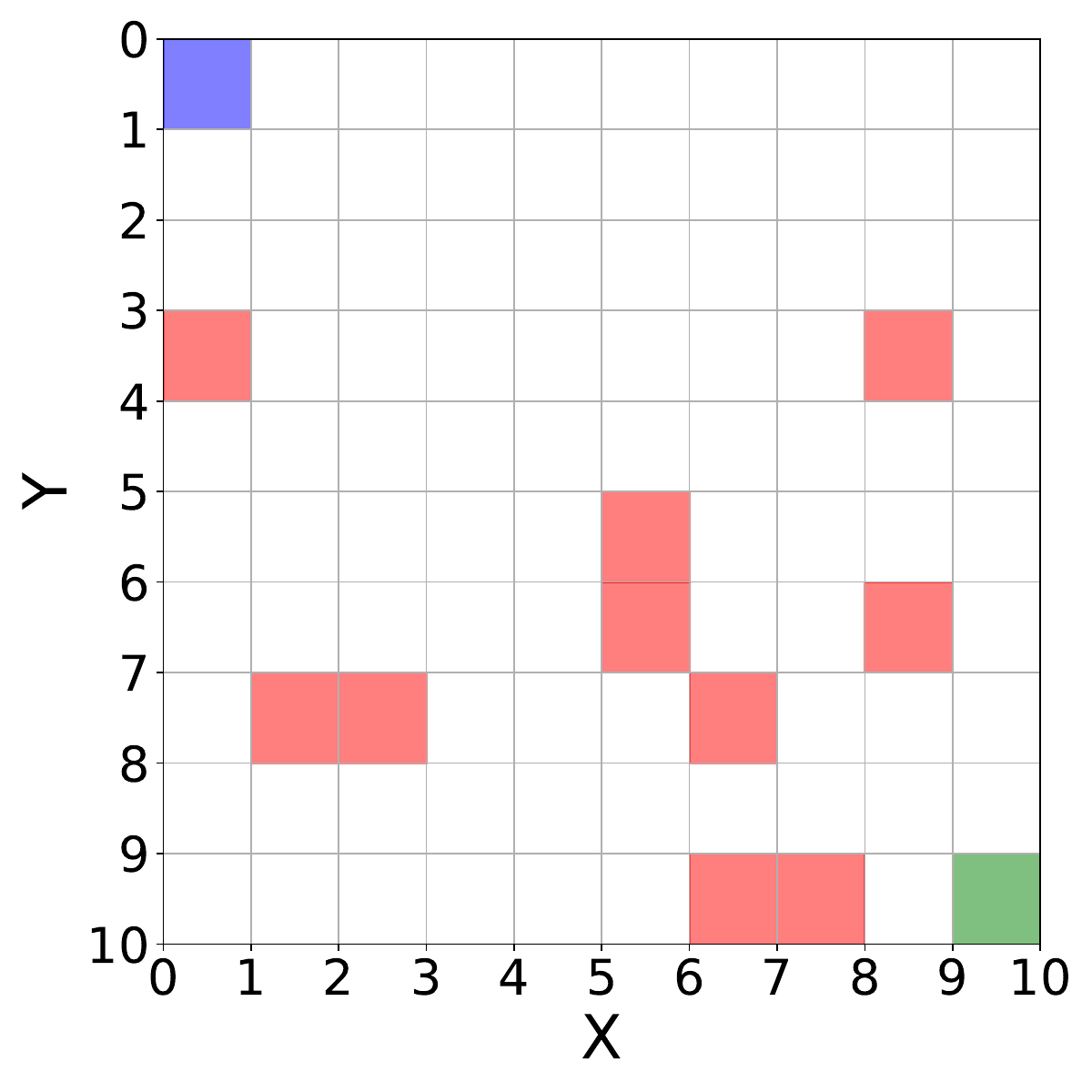}
        }
	\caption{Example instance of the slippery gridworld model of size $10 \times 10$ and with $10$ obstacles (in red), initial state (in blue), and target (in green). 
    Upon hitting a target, the agent is reset to the initial state.}
\label{fig:gridworld_example}
\end{figure}

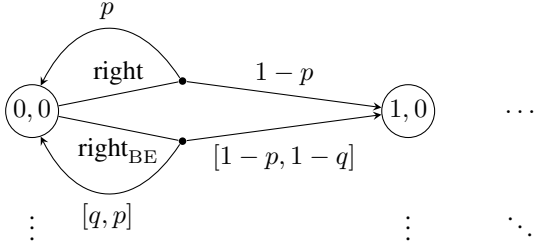
\begin{figure}[t!]
    \centering
        \begin{tikzpicture}[
    state/.append style={inner sep=0pt, inner sep=0pt, minimum size=20pt}, 
    >=stealth,
    bobbel/.style={minimum size=1mm,inner sep=0pt,fill=black,circle},
    mynode/.style={rectangle,fill=white,anchor=center}]

    \node[state] (s00) at (1,0) {$0,0$};
    \node[state] (s10) at ($(s00) + (5,0)$) {$1,0$};

    \node[bobbel] (s00r) at ($(s00) + (2,0.4)$) {};
    \draw (s00) -- node[above]{right} (s00r);
    \draw (s00r) edge[->] node[above]{$1-p$} (s10);
    \draw (s00r) edge[->, bend right = 60, looseness=1.5] node[above]{$p$} (s00);

    \node[bobbel] (s00rBE) at ($(s00) + (2,-0.4)$) {};
    \draw (s00) -- node[below]{$\text{right}_{\BE}$} (s00rBE);
    \draw (s00rBE) edge[->] node[below, yshift=-0.1cm]{$[1-p,1-q]$} (s10);
    \draw (s00rBE) edge[->, bend left = 60, looseness=1.5] node[below]{$[q,p]$} (s00);

    \node (s01) [below=0.7cm of s00] {$\vdots$};
    \node (s11) [below=0.7cm of s10] {$\vdots$};
    \node (s20) [right=0.8cm of s10] {$\hdots$};
    \node (s21) [below=0.9cm of s20] {$\ddots$};
\end{tikzpicture}
    \caption{Structure of a single action in the slippery gridworld IMDP model used in \cref{sec:experiment:prism}. 
    For every move into one of the four cardinal directions (left, right, up, down), the agent has a \emph{normal} action with a fixed slipping probability $p$ (\eg $\text{right}$), and a \emph{\besteffort} action (\eg $\text{right}_{\BE}$) with a slipping probability interval $[q,p]$. 
    The structure shown is repeated for every cell in the grid.}
\label{fig:gridworld_imdp}
\end{figure}

\begin{figure}[t!]
    \centering
        \begin{tikzpicture}[
    state/.append style={inner sep=0pt, inner sep=0pt, minimum size=20pt}, 
    >=stealth,
    bobbel/.style={minimum size=1mm,inner sep=0pt,fill=black,circle},
    mynode/.style={rectangle,fill=white,anchor=center}]

    \node[state] (s00) at (1,0) {$0,0$};
    \node[state] (s10) at ($(s00) + (5,0)$) {$1,0$};

    \node[bobbel] (s00r) at ($(s00) + (2,0.4)$) {};
    \draw (s00) -- node[above]{right} (s00r);
    \draw (s00r) edge[->] node[above]{$1-p+q$} (s10);
    \draw (s00r) edge[->, bend right = 60, looseness=1.5] node[above]{$p - q$} (s00);

    \node[bobbel] (s00rBE) at ($(s00) + (2,-0.4)$) {};
    \draw (s00) -- node[below]{$\text{right}_{\BE}$} (s00rBE);
    \draw (s00rBE) edge[->] node[below, yshift=-0.1cm]{$1-p+2q$} (s10);
    \draw (s00rBE) edge[->, bend left = 60, looseness=1.5] node[below]{$p - 2q$} (s00);

    \node (s01) [below=0.7cm of s00] {$\vdots$};
    \node (s11) [below=0.7cm of s10] {$\vdots$};
    \node (s20) [right=0.8cm of s10] {$\hdots$};
    \node (s21) [below=0.9cm of s20] {$\ddots$};
\end{tikzpicture}
    \caption{Structure of a single action in the slippery gridworld RMDP model used in \cref{sec:experiment:rvi}.
    This model has an $s$-rectangular uncertainty set, defined by a fixed worst-case slipping probability (\eg $p = 0.25$) and an uncertain improvement in the slipping probability of, \eg $0 \leq q \leq 0.1$.
    The structure shown is repeated for every cell in the grid.}
\label{fig:gridworld_rmdp}
\end{figure}
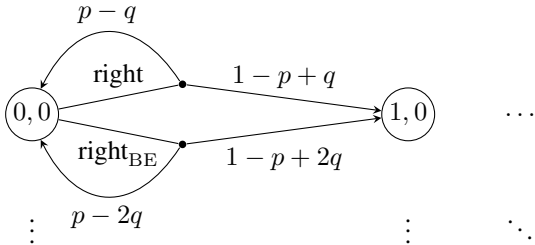

As discussed in \cref{sec:preliminaries}, randomized policies are necessary to achieve optimal values in $s$-rectangular RMDPs in general.
However, for the gridworld models, deterministic policies are sufficient for optimality, meaning that we may replace the (arg)max over distributions $\policy(s) \in \dist{A}$ in step 1 by an (arg)max over actions $a \in A$.

\paragraph{Parameters.}
In our implementation, we alternate between these steps until the value function has converged (up to a predefined $\epsilon > 0$, where we use $\epsilon = 10^{-4}$ in our experiments), or a predefined number of iterations is reached (we use a limit of $1\,000$ in our experiments).
In practice, this leads to a non-optimized but functional implementation of robust value iteration, which we use to compute policies and evaluate our methods for $s$-rectangular RMDPs.

\subsection{Results}
As described in \cref{sec:implementation}, we repeat every instance over 10 random seeds.
Note that the only source of randomness in our experiments comes from the generation of obstacles and the randomization in the order of defining the \besteffort versus non-\besteffort actions (reflected by the parameter $\nu$).
All standard deviations of the results presented in \cref{tab:gridworld_prism,tab:gridworld_rvi} are negligible and are thus omitted for clarity.

\end{document}